\newif\ifisarxiv
\isarxivtrue


\ifisarxiv
\documentclass[11pt]{article}
\usepackage{fullpage}
\else
\documentclass{article}
\PassOptionsToPackage{numbers,compress,sort}{natbib}
\usepackage{sty/neurips2019/neurips_2019}
\fi




\usepackage[utf8]{inputenc} 
\usepackage[T1]{fontenc}    
\usepackage{hyperref}       
\usepackage{url}            
\usepackage{booktabs}       
\usepackage{amsmath}
\usepackage{color}
\usepackage{graphicx}
\usepackage{multirow}
\usepackage{amsfonts}       
\usepackage{nicefrac}       
\usepackage{microtype}      
\usepackage{wrapfig}
\usepackage{algorithm}
\usepackage{algorithmic}
\usepackage{xfrac}
\usepackage{caption}

\def\d{\mathrm{d}}

\def\simiid{\overset{\textnormal{\fontsize{6}{6}\selectfont
i.i.d.}}{\sim}}

\DeclareMathOperator{\adj}{\mathrm{adj}}

\def\Sigmab{\mathbf{\Sigma}}
\def\Sigmabh{\widehat{\Sigmab}}

\def\g {\mathbf{g}}

\def\ee{\mathrm{e}}

\def\Lc{\mathcal{L}}
\def\Lch{\widehat{\Lc}}

\def\p{\mathbf p}

\def\Y{\mathbf Y}
\def\R{\mathbf R}
\def\H{\mathbf H}
\def\Hbh{\widehat{\H}}

\def\Zbt{\widetilde{\Z}}




\ifx\BlackBox\undefined
\newcommand{\BlackBox}{\rule{1.5ex}{1.5ex}}  
\fi

\DeclareMathOperator*{\argmin}{\mathop{\mathrm{argmin}}}

\def\x{\mathbf x}

\def\w{\mathbf w}
\def\v{\mathbf v}

\def\wbt{\widetilde{\mathbf w}}
\def\p{\mathbf{p}}
\def\pbh{\widehat{\p}}
\def\e{\mathbf e}
\def\zero{\mathbf 0}
\def\one{\mathbf 1}
\def\u{\mathbf u}

\def\X{\mathbf X}

\def\B{\mathbf B}
\def\A{\mathbf A}

\def\M{\mathbf M}

\def\Z{\mathbf Z}

\def\Zbt{\widetilde{\mathbf Z}}
\def\I{\mathbf I}

\def\A{\mathbf A}

\def\E{\mathbb E}

\def\R{\mathbb R} 
\def\tr{\mathrm{tr}}

\newcommand{\defeq}{\stackrel{\textit{\tiny{def}}}{=}}

\let\origtop\top
\renewcommand\top{{\scriptscriptstyle{\origtop}}} 

\definecolor{silver}{cmyk}{0,0,0,0.3}
\definecolor{yellow}{cmyk}{0,0,0.9,0.0}
\definecolor{reddishyellow}{cmyk}{0,0.22,1.0,0.0}
\definecolor{black}{cmyk}{0,0,0.0,1.0}
\definecolor{darkYellow}{cmyk}{0.2,0.4,1.0,0}
\definecolor{darkSilver}{cmyk}{0,0,0,0.1}
\definecolor{grey}{cmyk}{0,0,0,0.5}
\definecolor{darkgreen}{cmyk}{0.6,0,0.8,0}

\newcommand{\Green}[1]{{\color{darkgreen}  {#1}}}
\newcommand{\Blue}[1]{\color{blue}{#1}\color{black}}
\newcommand{\Brown}[1]{{\color{brown}{#1}\color{black}}}

\newenvironment{proofof}[2]{\par\vspace{2mm}\noindent\textbf{Proof of {#1} {#2}}\ }{\hfill\BlackBox}

\ifx\proof\undefined
\newenvironment{proof}{\par\noindent{\bf Proof\ }}{\hfill\BlackBox\\[2mm]}
\fi

\ifx\theorem\undefined
\newtheorem{theorem}{Theorem}
\fi

\ifx\example\undefined
\newtheorem{example}{Example}
\fi

\ifx\property\undefined

\fi

\ifx\lemma\undefined
\newtheorem{lemma}[theorem]{Lemma}
\fi

\ifx\proposition\undefined
\newtheorem{proposition}[theorem]{Proposition}
\fi

\ifx\remark\undefined
\newtheorem{remark}[theorem]{Remark}
\fi

\ifx\corollary\undefined
\newtheorem{corollary}[theorem]{Corollary}
\fi

\ifx\definition\undefined
\newtheorem{definition}{Definition}
\fi

\ifx\conjecture\undefined
\newtheorem{conjecture}[theorem]{Conjecture}
\fi

\ifx\axiom\undefined

\fi

\ifx\claim\undefined
\newtheorem{claim}[theorem]{Claim}
\fi

\ifx\assumption\undefined
\newtheorem{assumption}[theorem]{Assumption}
\fi

\title{Distributed estimation of the inverse Hessian by \\
  determinantal averaging}

%

\ifisarxiv\date{}\def\And{\and}\fi
\author{
      Micha{\l } Derezi\'{n}ski \\
Department of Statistics\\
University of California, Berkeley\\
\texttt{mderezin@berkeley.edu}\\
\And
Michael W. Mahoney\\
ICSI and Department of Statistics\\
University of California, Berkeley\\
\texttt{mmahoney@stat.berkeley.edu}
}

\begin{document}

\maketitle

\begin{abstract}
In distributed optimization and distributed numerical linear algebra,
we often encounter an \emph{inversion bias}: if we want to compute a
quantity that depends on the inverse of a sum of distributed matrices,
then the sum of the inverses does not equal the inverse of the sum.  
An example of this occurs in distributed Newton's method, where we
wish to compute (or implicitly work with) the inverse Hessian
multiplied by the gradient.  
In this case, locally computed estimates are biased, and so taking a
uniform average will not recover the correct solution.  
To address this, we propose \emph{determinantal averaging}, a new
approach for correcting the inversion bias. 
This approach involves reweighting the local estimates of the Newton's
step proportionally to the determinant of the local Hessian estimate,
and then averaging them together to obtain an improved global
estimate.  
This method provides the first known distributed Newton step that is
\emph{asymptotically consistent}, i.e., it recovers the exact step in
the limit as the number of distributed partitions grows to infinity.  
To show this, we develop new expectation identities and moment bounds
for the determinant and adjugate of a random matrix.  
Determinantal averaging can be applied not only to Newton's method,
but to computing any quantity that is a linear tranformation of a
matrix inverse, e.g., taking a trace of the inverse covariance matrix,
which is used in data uncertainty quantification.  
\end{abstract}

\section{Introduction}
\label{s:intro}

Many problems in machine learning and optimization require that we produce an accurate estimate of a square matrix $\H$ (such as the Hessian of a loss function or a sample covariance), while having access to many copies of some unbiased estimator of $\H$, i.e., a random matrix $\Hbh$ such that $\E[\Hbh]=\H$. 
In these cases, taking a uniform average of those independent copies provides a natural strategy for boosting the estimation accuracy, essentially by making use of the law of large numbers: $\frac1m\sum_{t=1}^m\Hbh_t\rightarrow
\H$. 
For many other problems, however, we are more interested in the inverse (Hessian/covariance) matrix $\H^{-1}$, and it is necessary or desirable to work with $\Hbh^{-1}$ as the estimator. 
Here, a na\"{\i}ve averaging approach has certain fundamental limitations (described in more detail below).
The basic reason for this is that $\E[\Hbh^{-1}]\neq \H^{-1}$, i.e., that there is what may be called an \emph{inversion bias}.

In this paper, we propose a method to address this inversion bias challenge.
The method uses a \emph{weighted} average, where the weights are carefully chosen to compensate for and correct the bias. 
Our motivation comes from distributed Newton's method (explained shortly), where combining independent estimates of the inverse Hessian is desired, but our method is more generally applicable, and so we first state our key ideas in a more general context.
\begin{theorem}\label{t:key}
  Let $s_i$ be independent random variables and $\Z_i$ be fixed square rank-$1$ matrices.
  If \,$\Hbh = \sum_i s_i\Z_i$ is invertible almost surely, then the inverse of the matrix $\H=\E[\Hbh]$ can be expressed~as:
\begin{align*}
\H^{-1} =
  \frac{\E\big[\!\det(\Hbh)\Hbh^{-1}\big]}{\E\big[\!\det(\Hbh)\big]}.
\end{align*}
\end{theorem}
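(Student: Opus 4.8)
The plan is to reduce the claim to a statement about the adjugate matrix and then exploit the rank-$1$ structure of the $\Z_i$. Since $\Hbh$ is invertible almost surely, the universal identity $\adj(\Hbh)=\det(\Hbh)\,\Hbh^{-1}$ lets us rewrite the numerator $\E[\det(\Hbh)\Hbh^{-1}]$ as $\E[\adj(\Hbh)]$. Hence it suffices to prove the matrix identity
\begin{align*}
\H\,\E[\adj(\Hbh)] = \E[\det(\Hbh)]\,\I,
\end{align*}
after which dividing by the scalar $\E[\det(\Hbh)]$ (which must be nonzero for the statement to make sense) yields both the invertibility of $\H$ and the stated formula. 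First I would start from the pointwise adjugate identity $\Hbh\,\adj(\Hbh)=\det(\Hbh)\,\I$, take expectations of both sides, and substitute $\Hbh=\sum_i s_i\Z_i$, obtaining $\sum_i \E[s_i\,\Z_i\,\adj(\Hbh)] = \E[\det(\Hbh)]\,\I$.

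The key step, and the main obstacle, is the claim that for each $i$ the random matrix $\Z_i\,\adj(\Hbh)$ does \emph{not} depend on $s_i$, i.e.\ it is a function of $\{s_j : j\neq i\}$ only. Writing $\Z_i=\a_i\b_i^\top$, it is enough to show that the row vector $\b_i^\top\adj(\Hbh)$ is independent of $s_i$. I would prove this by a cofactor argument: the $k$-th entry of $\b_i^\top\adj(\Hbh)$ is precisely the Laplace expansion along row $k$ of the matrix obtained from $\Hbh$ by replacing its $k$-th row with $\b_i^\top$. In that modified matrix the $k$-th row is the fixed vector $\b_i^\top$, while every other row $k'$ carries $s_i$-dependence only through the single term $s_i\,(a_i)_{k'}\,\b_i^\top$. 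Subtracting $s_i\,(a_i)_{k'}$ times row $k$ from each row $k'\neq k$ removes all $s_i$-dependence without changing the determinant, which proves the claim. This is exactly where the rank-$1$ hypothesis on $\Z_i$ is indispensable.

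Given this independence, the conclusion follows quickly from the independence of the $s_i$. Since $\Z_i\,\adj(\Hbh)$ is independent of the scalar $s_i$ and $\Z_i$ is deterministic,
\begin{align*}
\E[s_i\,\Z_i\,\adj(\Hbh)] = \E[s_i]\,\E[\Z_i\,\adj(\Hbh)] = \E[s_i]\,\Z_i\,\E[\adj(\Hbh)].
\end{align*}
Summing over $i$ and using $\H=\E[\Hbh]=\sum_i\E[s_i]\,\Z_i$ gives $\E[\det(\Hbh)]\,\I = \big(\sum_i \E[s_i]\,\Z_i\big)\E[\adj(\Hbh)] = \H\,\E[\adj(\Hbh)]$, which is the desired identity. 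I expect the only remaining delicate points to be the justification that $\E[\det(\Hbh)]\neq 0$ so the final division is valid, and a uniform check of the cofactor/row-reduction argument in the degenerate cases (e.g.\ $d=1$ and singular intermediate matrices); both are routine once the independence lemma is established.
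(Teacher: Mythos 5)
Your proof is correct, and it takes a genuinely different route from the paper's. The paper proceeds via a standalone lemma, proved by induction on the number of rank-one components using Sylvester's identity $\det(\A+\u\v^\top)=\det(\A)+\v^\top\!\adj(\A)\u$, establishing the two separate exact identities $\E[\det(\Hbh)]=\det(\H)$ and $\E[\adj(\Hbh)]=\adj(\H)$; the theorem then drops out from $\adj(\H)=\det(\H)\H^{-1}$. You instead prove the single combined identity $\H\,\E[\adj(\Hbh)]=\E[\det(\Hbh)]\,\I$ in one shot, starting from the pointwise identity $\Hbh\,\adj(\Hbh)=\det(\Hbh)\,\I$ and decoupling $s_i$ from $\Z_i\,\adj(\Hbh)$ via your row-reduction/cofactor argument; this is a clean Stein-type decoupling that avoids induction, and it correctly isolates where the rank-one hypothesis is indispensable (in the paper it enters through Sylvester's identity, and both uses ultimately reflect the fact that $\det(\Hbh)$ and $\adj(\Hbh)$ are affine in each $s_i$). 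What the paper's stronger lemma buys is twofold: first, $\E[\det(\Hbh)]=\det(\H)$ immediately shows the denominator is nonzero exactly when $\H$ is invertible, which is the one point you flag but leave open; second, the adjugate identity $\E[\adj(\Hbh)]=\adj(\H)$ is reused later in the paper (in the proof of Corollary~\ref{c:moments}, where $\E[\adj(\A_t)]=\I$ is needed), so the inductive lemma earns its keep beyond Theorem~\ref{t:key}. Your flagged point is indeed routine and can be closed with the same observation your cofactor argument rests on: by Sylvester, $\det(\Hbh)=\det(\M)+s_i\,\b_i^\top\!\adj(\M)\,\a_i$ with $\M$ free of $s_i$, so conditioning on $\{s_j\}_{j\neq i}$ and replacing $s_i$ by $\E[s_i]$ one index at a time yields $\E[\det(\Hbh)]=\det(\H)$ --- which is, in essence, the paper's induction recovered as a corollary of your own machinery.
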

To demonstrate the implications of Theorem \ref{t:key}, suppose that our goal is to estimate $F(\H^{-1})$ for some linear function $F$. 
For example, in the case of Newton's method $F(\H^{-1}) = \H^{-1}\g$, where $\g$ is the gradient and $\H$ is the Hessian. 
Another example would be $F(\H^{-1})=\tr(\H^{-1})$, where $\H$ is the
covariance matrix of a dataset and $\tr(\cdot)$ is the matrix trace, which is useful for uncertainty quantification.
For these and other cases, consider the following estimation of $F(\H^{-1})$, which takes
an average of the individual estimates $F(\Hbh_t^{-1})$, each weighted by the
determinant of $\Hbh_t$, i.e.,
\begin{align*}
\textbf{Determinantal Averaging:}\quad
  \hat{F}_m=
  \frac{\sum_{t=1}^ma_tF(\Hbh_t^{-1})}{\sum_{t=1}^ma_t},
  \qquad a_t = \det(\Hbh_t)  .
\end{align*}
By applying the law of large  numbers (separately to the numerator and the denominator), 
Theorem~\ref{t:key} easily implies that if $\Hbh_1,\dots,\Hbh_m$ are i.i.d.~copies
of $\Hbh$ then this 
\emph{determinantal averaging estimator}
is asymptotically consistent, i.e., $\hat{F}_m\rightarrow F(\H^{-1})$, almost surely. 
This 
determinantal averaging
estimator
is particularly useful when problem constraints do not allow us to compute $F\big((\frac1m\sum_t\Hbh_t)^{-1}\big)$, e.g., when the matrices are distributed and not easily combined. 

To establish finite sample convergence guarantees for estimators
obtained via determinantal averaging, we establish the following
matrix concentration result. 
We state it separately since it is technically interesting and since
its proof requires novel bounds for the higher moments of the
determinant of a random matrix, which is likely to be of independent
interest. Below and throughout the paper, $C$ denotes an absolute
constant and ``$\preceq$'' is the L\"owner order on positive semi-definite (psd) matrices.
\begin{theorem}\label{t:finite}
Let $\Hbh=\frac1k\sum_{i=1}^nb_i\Z_i\,+\B$ and $\H=\E[\Hbh]$, where
$\B$ is a positive definite $d\times d$ matrix and 
$b_i$ are i.i.d.~$\mathrm{Bernoulli}(\frac kn)$.
Moreover, assume that all $\Z_i$ are psd, $d\times d$ and
rank-$1$. If
$k\geq C\frac{\mu d^2}{\eta^2}\log^3\!\frac d\delta$\, for\,
$\eta\in(0,1)$\, and\, $\mu=\max_i\|\Z_i\H^{-1}\|/d$, then 
\begin{align*}
  \Big(1-\frac\eta {\sqrt{m}}\Big)\cdot\H^{-1} \preceq
  \frac{\sum_{t=1}^ma_t\Hbh_t^{-1}}{\sum_{t=1}^ma_t}\preceq 
  \Big(1+\frac\eta {\sqrt{m}}\Big)\cdot\H^{-1}\quad \text{with
  probability }\geq 1-\delta,
\end{align*}
where $\Hbh_1,\dots,\Hbh_m\simiid\Hbh$ and $a_t=\det(\Hbh_t)$.
\end{theorem}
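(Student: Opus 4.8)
The plan is to collapse the two-sided Löwner sandwich into a single matrix-concentration statement for an i.i.d.\ sum of mean-zero matrices, and then to feed that inequality with variance and higher-moment control coming from new expectation identities for $\det(\Hbh)$ and $\adj(\Hbh)$. First I would reduce to the isotropic case $\H=\I$. Writing $a_t\Hbh_t^{-1}=\adj(\Hbh_t)$ and conjugating the whole sandwich by $\H^{1/2}$, I replace each $\Hbh_t$ by $\H^{-1/2}\Hbh_t\H^{-1/2}$, whose expectation is $\I$; since $\det(\H^{-1/2}\Hbh_t\H^{-1/2})=\det(\H)^{-1}\det(\Hbh_t)$, the common factor $\det(\H)$ cancels in the ratio, so it suffices to prove the claim with $\H=\I$. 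Because $\B\succ0$, each $\Hbh_t$ is invertible almost surely, and writing $\B=\sum_{j=1}^d\beta_j\v_j\v_j^\top$ exhibits $\Hbh=\sum_i\tfrac{b_i}{k}\Z_i+\sum_j\beta_j\v_j\v_j^\top$ in the form $\sum_i s_i\Z_i$ (the $\B$-coefficients being constant, hence degenerate independent variables). Theorem~\ref{t:key} then gives the exact identity $\E[\adj(\Hbh)]=\E[\det(\Hbh)]\,\I$, i.e.\ the determinantally weighted estimator is unbiased; assume $\H=\I$ from here on.

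Second, I would fold the fluctuations of both the numerator and the random denominator into one centered sum. The upper bound $(\sum_t a_t\Hbh_t^{-1})/(\sum_t a_t)\preceq(1+\tfrac{\eta}{\sqrt m})\I$ is equivalent to $\sum_t X_t\preceq 0$ for $X_t=\adj(\Hbh_t)-\big(1+\tfrac{\eta}{\sqrt m}\big)\det(\Hbh_t)\,\I$, and the unbiasedness identity gives $\E[X_t]=-\tfrac{\eta}{\sqrt m}\,\E[\det\Hbh]\,\I\prec0$. Setting $Y_t=X_t-\E[X_t]$, this reads $\sum_t Y_t\preceq \eta\sqrt m\,\E[\det\Hbh]\,\I$, and the lower Löwner bound is symmetric, so it is enough to show $\|\sum_{t=1}^m Y_t\|\le \eta\sqrt m\,\E[\det\Hbh]$ with probability $\ge 1-\delta$. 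The virtue of this reformulation is that the lower-tail behavior of the denominator $\sum_t\det(\Hbh_t)$ is now carried inside the mean-zero terms $Y=\det(\Hbh)(\Hbh^{-1}-\I)-\tfrac{\eta}{\sqrt m}\big(\det(\Hbh)-\E[\det\Hbh]\big)\I$, so no separate (and, for small $m$, unavailable) bound on $\sum_t\det(\Hbh_t)$ is needed.

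Third --- and this is the \emph{crux} --- I would apply a moment form of the matrix Bernstein/Rosenthal inequality to the i.i.d.\ symmetric sum $\sum_t Y_t$. This requires a variance bound $\|\E[Y^2]\|\le v^2$ together with control of the higher moments $\E\,\|Y\|^p$; the bounded-difference version of matrix Bernstein is unavailable since $\det(\Hbh)$ has no almost-sure upper bound, which is exactly why moment bounds for the determinant are mandatory. Both inputs reduce to expectation identities for $\det$ and $\adj$: the dominant variance term is $\E[\det(\Hbh)^2(\Hbh^{-1}-\I)^2]$, which I would evaluate using the rank-$1$ structure $\Z_i=\z_i\z_i^\top$ and the Cauchy--Binet-type expansion of $\det(\Hbh)$ and $\adj(\Hbh)$ into sums over index subsets, so that expectations collapse to products of $\E[b_i]=k/n$. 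The parameter $\mu=\max_i\|\Z_i\H^{-1}\|/d$ bounds each rank-$1$ contribution, and a careful accounting should give $\|\E[Y^2]\|/(\E[\det\Hbh])^2\le \eta^2/\log(d/\delta)$ precisely when $k\ge C\mu d^2/\eta^2$; the surplus $\log^3(d/\delta)$ budget then absorbs the $\sqrt{\log d}$ loss of matrix concentration together with the logarithmic cost of the higher determinant moments. Substituting $v$ and the moment bounds into matrix Bernstein yields $\|\sum_t Y_t\|\le C'\big(\sqrt{m\,v^2\log(d/\delta)}+\text{lower order}\big)\le \eta\sqrt m\,\E[\det\Hbh]$ with probability $\ge 1-\delta$, as required.

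The main obstacle is this third step: obtaining higher-moment bounds for $\det(\Hbh)$ and $\adj(\Hbh)$ that are sharp enough in $d$. Unlike $\Hbh$ itself, the determinant is a degree-$d$ polynomial in the $b_i$ and fluctuates multiplicatively, so crude estimates lose large powers of $d$; securing the stated $\mu d^2$ scaling requires genuinely new determinant moment inequalities, and these are the source of both the $d^2$ and the $\log^3(d/\delta)$ factors. By contrast, the reduction to $\H=\I$, the single-sum reformulation that neutralizes the denominator, and the final matrix Bernstein application are comparatively routine.
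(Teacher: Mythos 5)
Your first two steps are sound and partly mirror the paper: the whitening reduction to $\H=\I$ via $\A_t=\H^{-\frac12}\Hbh_t\H^{-\frac12}$ is exactly how the proof of Theorem~\ref{t:finite} begins, the unbiasedness identity $\E[\adj(\Hbh)]=\E[\det(\Hbh)]\,\H^{-1}$ is Theorem~\ref{t:key} (with $\B$ absorbed as degenerate rank-$1$ terms, as you say), and your reformulation of the two-sided L\"owner bound as a single centered sum $\sum_t Y_t$ with $Y_t=\adj(\A_t)-\I-(1\pm\frac{\eta}{\sqrt m})(\det(\A_t)-1)\,\I$ is a legitimate (and slightly slicker) alternative to the paper's treatment, which instead bounds $\frac1m\sum_t\det(\A_t)$ and $\frac1m\sum_t\adj(\A_t)$ separately via Corollary~\ref{c:moments} and a union bound, then controls the ratio by $\frac{2\alpha}{1-\alpha}$. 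Your closing concentration step also matches the paper's: after symmetrization, the matrix Rosenthal/Khintchine inequality (Lemma~\ref{t:rosenthal}) converts $p$-th moment bounds into the $\eta/\sqrt m$ tail.

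The genuine gap is your third step, which is precisely the theorem's technical content, and the route you sketch for it would not go through. First, a variance bound $\|\E[Y^2]\|$ is structurally insufficient: since the guarantee must hold with probability $1-\delta$ at rate $\eta/\sqrt m$ uniformly in $m$, Chebyshev-type arguments fail and one needs moments of order $p\sim\log\frac d\delta$ of $\det(\A)$ and $\adj(\A)$ --- you acknowledge this, but then offer only a Cauchy--Binet expansion, which at order $p$ becomes a $p$-fold sum over tuples of $d$-subsets with overlapping-index corrections; crude accounting of such expansions loses polynomial factors in $d$ (the paper notes the determinant's moments grow much faster than sub-gaussian), and your own admission that ``genuinely new determinant moment inequalities'' are required names the missing step rather than supplying it. The paper fills this hole analytically, not combinatorially (Lemma~\ref{t:moments}): writing $\delta_1,\dots,\delta_d$ for the eigenvalues of $\I-\A$, it uses the inequality $\det(\A)\,\ee^{\tr(\I-\A)}=\prod_i(1-\delta_i)\ee^{\delta_i}\geq\prod_i(1-\delta_i^2)$, which converts the multiplicative, degree-$d$ fluctuation of the determinant into two separately controllable quantities: $\|\I-\A\|$, handled by matrix Bernstein (Lemma~\ref{t:bernstein}), and $\tr(\A-\I)$, handled by scalar Bernstein with the much smaller variance $\epsilon d/\gamma$ (a naive product bound $\det(\A)\geq(1-z)^d$ would be exponentially loose); the upper tail uses $\det(\A)\leq\ee^{\tr(\A-\I)}$, and the adjugate is then squeezed via $\frac{\det(\A)}{\lambda_{\max}(\A)}\,\I\preceq\adj(\A)\preceq\frac{\det(\A)}{\lambda_{\min}(\A)}\,\I$. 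This two-scale Bernstein device is what produces the stated $k\geq C\mu d^2\eta^{-2}\log^3\frac d\delta$ threshold, and nothing in your proposal substitutes for it.
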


\subsection{Distributed Newton's method}\label{ss:newton}
To illustrate how determinantal averaging can be useful in the context
of distributed optimization, consider the task of batch minimization
of a convex loss over vectors $\w\in\R^d$, defined as follows:
\vspace{-2mm}
\begin{align}
  \Lc(\w) \defeq \frac1n \sum_{i=1}^n\ell_i(\w^\top\x_i)\
  +\frac\lambda2\|\w\|^2,\label{eq:loss}
\end{align}
where $\lambda>0$, and $\ell_i$ are convex, twice differentiable and smooth.
Given a vector $\w$, Newton's method dictates that the correct way to move towards the optimum is to perform an update $\wbt = \w - \p$, with $\p= \nabla^{-2}\!\Lc(\w)\,\nabla\!\Lc(\w)$, where $\nabla^{-2}\!\Lc(\w)=(\nabla^2\Lc(\w))^{-1}$ denotes the inverse Hessian of $\Lc$ at $\w$.%
\footnote{Clearly, one would not actually compute the inverse of the Hessian explicitly~\cite{XRM17_theory_TR,YXRM18_TR}. We describe it this way for simplicity. Our results hold whether or not the inverse operator is computed explicitly.} 
Here, the Hessian and gradient are:
\begin{align*}
  \nabla^2\!\Lc(\w)
  = \frac1n\sum_i \ell_i''(\w^\top\x_i)\,
  \x_i\x_i^\top\, +\lambda\I,\quad\text{and}\quad\nabla\!\Lc(\w) = \frac1n\sum_i \ell_i'(\w^\top\x_i)\,\x_i\ +
        \lambda\w.
\end{align*}
For our distributed Newton application, we study a distributed computational model, where a single machine has access to a subsampled version of $\Lc$ with sample size parameter $k\ll n$:
\begin{align}
  \Lch(\w)\defeq \frac1k\sum_{i=1}^nb_i\ell_i(\w^\top\x_i)\
  +\frac\lambda2\|\w\|^2,\quad\text{where}\quad b_i\sim\mathrm{Bernoulli}\big(k/n\big).\label{eq:l-est}
\end{align}
Note that $\Lch$ accesses on average $k$ loss components $\ell_i$ ($k$
is the expected local sample size), and 
moreover, $\E\big[\Lch(\w)\big] = \Lc(\w)$ for any $\w$. The goal is
to compute local estimates of the Newton's step $\p$ in a
communication-efficient manner (i.e., by only sending $O(d)$
parameters from/to a single machine), then combine them into a
better global estimate. The gradient has size $O(d)$ so it can be computed exactly within
this communication budget (e.g., via map-reduce), however the Hessian has
to be approximated locally by each machine. Note that other computational models
can be considered, such as those where the global gradient is not
computed (and local gradients are used instead).
 \begin{wrapfigure}{r}{0.45\textwidth}
\vspace{-.8cm}
\begin{center}
  \includegraphics[width=.47\textwidth]{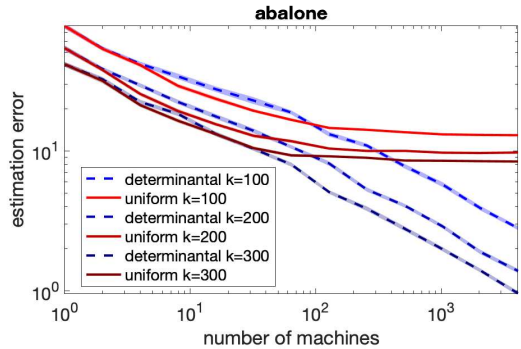}
  \vspace{-6mm}
  \captionof{figure}{Newton step estimation error versus number of machines,
     averaged over 100 runs (shading is
     standard error) for a libsvm dataset \cite{libsvm}. More plots in
     Appendix~\ref{a:experiments}.} 
\label{fig:abalone}
\end{center}
\vspace{-.8cm}
\end{wrapfigure}

Under the constraints described above, the most natural strategy is to use directly the Hessian of the locally subsampled loss $\Lch$ (see, e.g., GIANT~\cite{distributed-newton}), resulting in the approximate Newton step $\pbh
  =\nabla^{-2}\!\Lch(\w)\,\nabla\!\Lc(\w)$.
Suppose that we independently construct $m$ i.i.d.~copies of this estimate:
$\pbh_1,\dots,\pbh_m$ (here, $m$ is the number of machines). 
Then, for sufficiently large $m$, taking a simple average of the estimates will stop converging to $\p$ because of the inversion bias: $ \frac1m\sum_{t=1}^m\pbh_t\rightarrow \E\big[\pbh\big] \neq \p$. 
Figure \ref{fig:abalone} shows this by plotting the estimation error (in Euclidean distance) of the averaged Newton step estimators, when the weights are uniform and determinantal (for more details and plots, see Appendix~\ref{a:experiments}). 

The only way to reduce the estimation error beyond a certain point is to increase the local sample size $k$ (thereby reducing the inversion bias), which raises the computational cost per machine. 
Determinantal averaging corrects the inversion bias so that estimation error can always be decreased by adding more machines without increasing the local sample size. 
From the preceding discussion we can easily show that determinantal
averaging leads to an asymptotically consistent estimator. 
This is a corollary of Theorem~\ref{t:key}, as proven in
Section~\ref{sxn:expectation_identities}. 

\begin{corollary}\label{t:limit}
Let $\{\Lch_t\}_{t=1}^\infty$ be i.i.d. samples of \eqref{eq:l-est}
and define $a_t =\det\!\big(\nabla^2\!\Lch_t(\w)\big)$. Then:
\begin{align*}
\frac{\sum_{t=1}^ma_t\,\pbh_t
  }{\sum_{t=1}^ma_t}\
  \overset{\text{a.s.}}{\underset{m\rightarrow\infty}{\longrightarrow}}\ 
\p, 
\quad\text{where}\quad
\pbh_t=\nabla^{-2}\!\Lch_t(\w)\,\nabla\!\Lc(\w)
\ \text{ and }\
  \p=\nabla^{-2}\!\Lc(\w)\,\nabla\!\Lc(\w).
  \end{align*}
\end{corollary}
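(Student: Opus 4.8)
The plan is to reduce the corollary to Theorem~\ref{t:key}, applied with the linear map $F(\cdot)$ equal to right-multiplication by the fixed gradient vector, and then invoke the strong law of large numbers. First I would cast the local Hessian $\Hbh=\nabla^2\!\Lch(\w)$ into the form required by Theorem~\ref{t:key}. Writing
\begin{align*}
\Hbh = \sum_{i=1}^n \tfrac1k b_i\,\ell_i''(\w^\top\x_i)\,\x_i\x_i^\top + \lambda\I,
\end{align*}
the data terms are already fixed rank-$1$ matrices $\x_i\x_i^\top$ scaled by the independent random coefficients $s_i=\tfrac1k b_i\ell_i''(\w^\top\x_i)$. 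The regularizer is handled by the decomposition $\lambda\I=\sum_{j=1}^d\lambda\,\e_j\e_j^\top$ into rank-$1$ matrices carrying the constant coefficient $\lambda$, which is trivially independent of the $b_i$. Thus $\Hbh=\sum_i s_i\Z_i$ meets the hypotheses of Theorem~\ref{t:key}, and since $\Hbh\succeq\lambda\I\succ\zero$ it is invertible almost surely. Taking expectations coordinatewise and using $\E[b_i]=k/n$ confirms $\E[\Hbh]=\nabla^2\!\Lc(\w)=\H$, so that $\H^{-1}=\nabla^{-2}\!\Lc(\w)$.

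Next I would apply Theorem~\ref{t:key} to obtain $\H^{-1}=\E[\det(\Hbh)\Hbh^{-1}]/\E[\det(\Hbh)]$ and right-multiply by the fixed vector $\nabla\!\Lc(\w)$. Because this operation is linear it can be pulled inside the expectations, and with $\pbh=\Hbh^{-1}\nabla\!\Lc(\w)$ and $a=\det(\Hbh)$ this yields the exact finite-population identity
\begin{align*}
\p=\H^{-1}\nabla\!\Lc(\w)=\frac{\E[a\,\pbh]}{\E[a]}.
\end{align*}

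Finally I would pass to the limit. Applying the strong law of large numbers separately (and, for the numerator, coordinatewise) to the i.i.d. sequences $\{a_t\}$ and $\{a_t\pbh_t\}$ gives $\tfrac1m\sum_t a_t\to\E[a]$ and $\tfrac1m\sum_t a_t\pbh_t\to\E[a\,\pbh]$ almost surely; the needed integrability is immediate, since each $b_i\in\{0,1\}$ forces $\Hbh$ to take only finitely many values, so $a$ and $a\pbh$ are bounded. Dividing the two limits, which is legitimate because $\E[a]=\E[\det(\Hbh)]\geq\lambda^d>0$ by $\Hbh\succeq\lambda\I$, and combining with the identity above gives the claim. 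I do not expect a genuine obstacle here; the only points requiring care are the rank-$1$ recasting of the regularizer so that Theorem~\ref{t:key} literally applies, and confirming $\E[a]>0$, which is what makes the ratio of the almost-sure limits well defined.
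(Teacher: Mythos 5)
Your proposal is correct and follows essentially the same route as the paper: the paper likewise writes $\nabla^2\!\Lch(\w)=\frac1k\sum_i b_i\ell_i''(\w^\top\x_i)\,\x_i\x_i^\top+\lambda\sum_{i=1}^d\e_i\e_i^\top$ to fit the rank-$1$ hypothesis of Theorem~\ref{t:key}, then applies the law of large numbers separately to the numerator $\frac1m\sum_t\det(\Hbh_t)\Hbh_t^{-1}\nabla\!\Lc(\w)$ and the denominator $\frac1m\sum_t\det(\Hbh_t)$ before taking the ratio. Your added remarks on integrability ($\Hbh$ takes finitely many values) and on $\E[\det(\Hbh)]\geq\lambda^d>0$ are details the paper leaves implicit, and they are both valid.
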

The (unnormalized) determinantal weights can be computed locally in
the same time as it takes to compute the Newton estimates so they do
not add to the overall cost of the procedure. While this
result is only an asymptotic statement, it holds with virtually no assumptions
on the loss function (other than twice-differentiability) or the
expected local sample size $k$. 
However, with some additional assumptions we will now establish a
convergence guarantee with a finite number of machines $m$ by bounding
the estimation error for the determinantal averaging estimator of the Newton step.

In the next result, we use Mahalanobis distance, denoted $\|\v\|_{\M}=\sqrt{\v^\top\M\v}$, to measure the error of the Newton step estimate (i.e., the deviation from optimum $\p$), with $\M$ chosen as the Hessian of $\Lc$. 
This choice is motivated by standard convergence analysis of Newton's method, discussed next. 
This is a corollary of Theorem \ref{t:finite}, as explained in Section \ref{s:error}.
\begin{corollary}\label{t:error}
For any $\delta,\eta\!\in\!(0,1)$  if expected local sample size satisfies
$k\geq C\eta^{-2}\mu d^2\log^3\!\frac d\delta$ 
then
\begin{align*}
  \bigg\|\,\frac{\sum_{t=1}^ma_t\,\pbh_t}{\sum_{t=1}^ma_t} \,-\,
  \p\,\bigg\|_{\nabla^2\!\Lc(\w)}\!\!\leq \frac\eta
  {\sqrt{m}}\cdot\big\|\,\p\,\big\|_{\nabla^2\!\Lc(\w)}\quad
  \text{with probability }\geq 1-\delta,
\end{align*}
where $\mu= \frac1d\max_i\ell_i''(\w^\top\!\x_i)\|\x_i\|^2_{\nabla^{-2}\!\Lc(\w)}$, and $a_t$, $\pbh_t$ and $\p$ are defined as in
Corollary \ref{t:limit}.   
\end{corollary}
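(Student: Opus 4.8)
The plan is to recognize Corollary~\ref{t:error} as a direct instantiation of Theorem~\ref{t:finite}, followed by a short reduction that turns the two-sided L\"owner bound on the inverse-Hessian estimator into the stated Mahalanobis bound on the Newton step. Since the heavy lifting is done by Theorem~\ref{t:finite} (which we may assume), the work here is essentially a whitening computation.

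First I would match the model. Set $\H=\nabla^2\!\Lc(\w)$ and $\Hbh_t=\nabla^2\!\Lch_t(\w)$, and note that $\nabla^2\!\Lch_t(\w)=\frac1k\sum_{i=1}^n b_i\,\Z_i+\lambda\I$ with $\Z_i=\ell_i''(\w^\top\!\x_i)\,\x_i\x_i^\top$ and $b_i\sim\mathrm{Bernoulli}(k/n)$. This is exactly the setting of Theorem~\ref{t:finite}: here $\B=\lambda\I$ is positive definite and each $\Z_i$ is psd and rank-$1$. It remains to check that the two occurrences of $\mu$ coincide. Because $\Z_i$ and $\H^{-1}$ are psd, the product $\Z_i\H^{-1}$ is similar to the symmetric psd matrix $\H^{-1/2}\Z_i\H^{-1/2}$, whose only nonzero eigenvalue is $\ell_i''(\w^\top\!\x_i)\,\x_i^\top\H^{-1}\x_i$; hence $\|\Z_i\H^{-1}\|/d=\frac1d\ell_i''(\w^\top\!\x_i)\|\x_i\|_{\nabla^{-2}\!\Lc(\w)}^2$, which is precisely the $\mu$ of the corollary. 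So the sample-size hypothesis transfers verbatim.

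Next I would factor out the gradient. Since $\g=\nabla\!\Lc(\w)$ is computed exactly and is therefore the same vector on every machine, $\pbh_t=\Hbh_t^{-1}\g$, so $\frac{\sum_t a_t\pbh_t}{\sum_t a_t}=\Sb\,\g$, where $\Sb=\frac{\sum_t a_t\Hbh_t^{-1}}{\sum_t a_t}$ is exactly the estimator of $\H^{-1}$ controlled by Theorem~\ref{t:finite}. Writing $\p=\H^{-1}\g$ and $\Delta=\Sb-\H^{-1}$, the quantity to bound is $\|\Delta\g\|_{\H}$. Note $\Delta$ is symmetric: each $\Hbh_t$ is a positive definite Hessian, so $a_t=\det(\Hbh_t)>0$ and $\Sb$ is a genuine convex combination of the symmetric matrices $\Hbh_t^{-1}$. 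The main step is to convert the L\"owner bound into this norm bound. Applying Theorem~\ref{t:finite} gives, with probability at least $1-\delta$, $-\frac{\eta}{\sqrt m}\H^{-1}\preceq\Delta\preceq\frac{\eta}{\sqrt m}\H^{-1}$. Conjugating by $\H^{1/2}$ and setting $\widetilde\Delta=\H^{1/2}\Delta\H^{1/2}$ turns this into $-\frac{\eta}{\sqrt m}\I\preceq\widetilde\Delta\preceq\frac{\eta}{\sqrt m}\I$, i.e.\ $\widetilde\Delta^2\preceq\frac{\eta^2}{m}\I$. Using $\Delta=\H^{-1/2}\widetilde\Delta\H^{-1/2}$ I would then compute
\[
\|\Delta\g\|_{\H}^2=\g^\top\Delta\,\H\,\Delta\,\g=(\H^{-1/2}\g)^\top\widetilde\Delta^2(\H^{-1/2}\g)\le\frac{\eta^2}{m}\,\g^\top\H^{-1}\g=\frac{\eta^2}{m}\,\|\p\|_{\H}^2,
\]
where the last equality is $\|\p\|_{\H}^2=\|\H^{-1}\g\|_{\H}^2=\g^\top\H^{-1}\g$. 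Taking square roots gives the claim.

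I do not expect a serious obstacle here, since once Theorem~\ref{t:finite} is granted the corollary reduces to the whitening identity above. The only step demanding care is the identification of $\mu$, where the spectral norm of the non-symmetric product $\Z_i\H^{-1}$ must be interpreted through its symmetric counterpart $\H^{-1/2}\Z_i\H^{-1/2}$; everything else is routine positive-definite linear algebra.
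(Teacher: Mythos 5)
Your proposal is correct and follows essentially the same route as the paper: instantiate Theorem~\ref{t:finite} with $\Z_i=\ell_i''(\w^\top\!\x_i)\,\x_i\x_i^\top$ and $\B=\lambda\I$, then convert the two-sided L\"owner bound into the Mahalanobis bound by conjugating with $\H^{1/2}$ and splitting off $\|\H^{-1/2}\g\|=\|\p\|_{\H}$, which is exactly the paper's one-line estimate $\|\H^{1/2}(\,\cdot\,)\H^{1/2}\|\cdot\|\H^{-1/2}\g\|$ phrased through $\widetilde\Delta^2\preceq\frac{\eta^2}{m}\I$. Your explicit verification that $\|\Z_i\H^{-1}\|/d$ equals the corollary's $\mu$ via the symmetric counterpart $\H^{-1/2}\Z_i\H^{-1/2}$ is in fact spelled out more carefully than in the paper, which compresses this identification into a single remark.
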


We next establish how this error bound impacts the convergence
guarantees offered by Newton's method. Note that under our assumptions
$\Lc$ is strongly convex so there is a unique minimizer
$\w^*=\argmin_\w\Lc(\w)$. We ask how the distance from
optimum, $\|\w-\w^*\|$, changes after we make an update
$\wbt=\w-\pbh$. For this, we have to assume that the Hessian
matrix is $L$-Lipschitz as a function of $\w$. After this standard
assumption, a classical analysis of the Newton's method reveals that
Corollary~\ref{t:error} implies the following Corollary~\ref{c:rate} (proof in
Appendix~\ref{a:newton}).
\begin{assumption}\label{a:lipschitz}
  The Hessian is $L$-Lipschitz: 
  $\|\nabla^2\!\Lc(\w)-\nabla^2\!\Lc(\wbt)\|\leq L\,\|\w-\wbt\|$
  for any $\w,\wbt\in\R^d$.
\end{assumption}
\begin{corollary}\label{c:rate}
For any $\delta,\eta\!\in\!(0,1)$  if expected local sample size satisfies
$k\geq C\eta^{-2}\mu d^2\log^3\!\frac d\delta$ then under
Assumption \ref{a:lipschitz} it holds with probability at least
$1-\delta$ that
\begin{align*}
  \big\|\wbt-\w^*\big\|\leq \max\Big\{\frac{\eta }{\sqrt{m}}\sqrt{\kappa}\,
  \big\|\w-\w^*\big\|,\ \ \frac{2L}{\lambda_{\min}} \, \big\|\w-\w^*\big\|^2\Big\} \quad
  \text{ for }\
  \wbt=\w-\frac{\sum_{t=1}^ma_t\,\pbh_t}{\sum_{t=1}^ma_t},
\end{align*}
where $C$, $\mu$, $a_t$ and $\pbh_t$ are defined as in Corollaries
\ref{t:limit} and \ref{t:error}, while $\kappa$ and $\lambda_{\min}$ are the condition
number and smallest eigenvalue of $\nabla^2\!\Lc(\w)$, respectively.
\end{corollary}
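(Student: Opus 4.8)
The plan is to treat the determinantal estimator $\pbh=\frac{\sum_t a_t\pbh_t}{\sum_t a_t}$ as an inexact Newton direction and run the textbook local convergence analysis, feeding in Corollary~\ref{t:error} as the control on the direction error. Write $\H=\nabla^2\!\Lc(\w)$ and $\g=\nabla\!\Lc(\w)$, so the exact step is $\p=\H^{-1}\g$ and the update is $\wbt=\w-\pbh$. I would begin from the decomposition
\begin{align*}
\wbt-\w^* \;=\; (\w-\p-\w^*)\;-\;(\pbh-\p),
\end{align*}
and bound the two pieces by the triangle inequality: the first is the error of the \emph{exact} Newton update, and the second is exactly the quantity estimated by Corollary~\ref{t:error}.

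First I would dispatch the exact Newton term by the classical argument. Using $\g=\nabla\!\Lc(\w)-\nabla\!\Lc(\w^*)=\big(\int_0^1\nabla^2\!\Lc(\w^*+t(\w-\w^*))\,\d t\big)(\w-\w^*)$ together with $\p=\H^{-1}\g$, one rewrites $\w-\p-\w^*=\H^{-1}\int_0^1\big[\H-\nabla^2\!\Lc(\w^*+t(\w-\w^*))\big]\d t\,(\w-\w^*)$. Assumption~\ref{a:lipschitz} bounds the bracketed integrand in operator norm by $L(1-t)\|\w-\w^*\|$, and strong convexity gives $\|\H^{-1}\|\le 1/\lambda_{\min}$; integrating the factor $(1-t)$ yields the familiar quadratic bound $\|\w-\p-\w^*\|\le \frac{L}{2\lambda_{\min}}\|\w-\w^*\|^2$.

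Next I would convert the Mahalanobis guarantee of Corollary~\ref{t:error} into a Euclidean one. Since $\lambda_{\min}\I\preceq\H\preceq\lambda_{\max}\I$ with $\lambda_{\max}=\kappa\lambda_{\min}$, we have $\|\v\|\le\lambda_{\min}^{-1/2}\|\v\|_{\H}$ and $\|\v\|_{\H}\le\lambda_{\max}^{1/2}\|\v\|$, so Corollary~\ref{t:error} gives $\|\pbh-\p\|\le\lambda_{\min}^{-1/2}\|\pbh-\p\|_{\H}\le \frac{\eta}{\sqrt m}\,\lambda_{\min}^{-1/2}\lambda_{\max}^{1/2}\,\|\p\|=\frac{\eta}{\sqrt m}\sqrt\kappa\,\|\p\|$. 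I would then control $\|\p\|$ by reusing the previous step: since $\p=(\w-\w^*)-(\w-\p-\w^*)$, the quadratic bound gives $\|\p\|\le\|\w-\w^*\|+\frac{L}{2\lambda_{\min}}\|\w-\w^*\|^2$, whence $\|\pbh-\p\|\le \frac{\eta}{\sqrt m}\sqrt\kappa\,\|\w-\w^*\|+\frac{\eta}{\sqrt m}\sqrt\kappa\cdot\frac{L}{2\lambda_{\min}}\|\w-\w^*\|^2$.

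Summing the two contributions produces a linear-plus-quadratic bound of the form $\|\wbt-\w^*\|\le \frac{\eta}{\sqrt m}\sqrt\kappa\,\|\w-\w^*\|+\big(1+\tfrac{\eta}{\sqrt m}\sqrt\kappa\big)\frac{L}{2\lambda_{\min}}\|\w-\w^*\|^2$, and the final step is to fold this into the stated maximum via the standard case split on which of the two terms dominates, bounding the sum by (a small multiple of) the larger one. The main obstacle is precisely this last consolidation together with the norm conversion feeding it: one must verify that the Mahalanobis-to-Euclidean passage produces $\sqrt\kappa$ and not $\kappa$ (so that the linear rate in the bound is $\tfrac{\eta}{\sqrt m}\sqrt\kappa$), and then check that the higher-order cross term carried along when estimating $\|\p\|$ is genuinely absorbed by the generous constant in the quadratic slot --- the slack between the pure-Newton coefficient $\frac{L}{2\lambda_{\min}}$ and the stated $\frac{2L}{\lambda_{\min}}$. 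This bookkeeping is routine but delicate, since the constants in the two slots are asymmetric and the clean max form relies on the regime $\frac{\eta}{\sqrt m}\sqrt\kappa\le 1$ in which the estimator is actually contractive; everything else reduces to the already-established Corollary~\ref{t:error}.
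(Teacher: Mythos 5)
Your two building blocks are sound, and they are essentially the same raw inputs the paper uses: the integral--Taylor/Lipschitz argument correctly gives $\|\w-\p-\w^*\|\leq\frac{L}{2\lambda_{\min}}\|\w-\w^*\|^2$, and the Mahalanobis-to-Euclidean conversion of Corollary~\ref{t:error} really does produce $\sqrt\kappa$ rather than $\kappa$, precisely because the corollary controls the error in the $\H$-norm against $\|\p\|_{\H}$. The genuine gap is the final consolidation step, which you flag as ``routine but delicate'': it is not merely bookkeeping, and it fails outside the regime $\alpha\sqrt\kappa\leq 1$ (with $\alpha=\frac{\eta}{\sqrt m}$), a restriction the corollary does not make. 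Writing $u=\alpha\sqrt\kappa\,\|\w-\w^*\|$ and $v=\frac{2L}{\lambda_{\min}}\|\w-\w^*\|^2$ for the two slots of the stated maximum, your additive bound carries the cross term $r=\alpha\sqrt\kappa\cdot\frac{L}{2\lambda_{\min}}\|\w-\w^*\|^2$, and a direct computation gives $r/\sqrt{uv}=\big(\alpha\sqrt\kappa\,L\,\|\w-\w^*\|/(8\lambda_{\min})\big)^{1/2}$. Since $\max\{u,v\}\geq\sqrt{uv}$, the ratio $r/\max\{u,v\}$ is unbounded when $\alpha\sqrt\kappa$ and $L\|\w-\w^*\|/\lambda_{\min}$ are simultaneously large, so no absolute-constant adjustment (halving $\eta$, inflating $C$) can fold $r$ into the stated maximum in all regimes: the forward triangle-inequality decomposition structurally cannot recover the unconditional $\max$ form of the corollary.

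The paper sidesteps this by never forming an additive bound. In Appendix~\ref{a:newton} it introduces the auxiliary quadratic $\phi(\p)=\p^\top\H\p-2\p^\top\g$, observes that the guarantee of Corollary~\ref{t:error} is equivalent to $\phi(\pbh)\leq(1-\alpha^2)\min_{\p}\phi(\p)$, and then invokes Lemma~\ref{l:9} (from \cite{distributed-newton}) to obtain the \emph{implicit} inequality $\|\wbt-\w^*\|_{\H}^2\leq L\,\|\w-\w^*\|^2\|\wbt-\w^*\|+\frac{\alpha^2}{1-\alpha^2}\|\w-\w^*\|_{\H}^2$. Because $\|\wbt-\w^*\|$ appears on both sides, splitting on which right-hand term dominates yields exactly the two branches of the maximum, with no cross term, in every regime. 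To repair your proof, keep both of your estimates but close them into a recursive inequality of this type (or cite the standard inexact-Newton lemma) instead of summing. A mitigating remark: pure constant slack is tolerated here --- the paper's own case split actually produces $\frac{\sqrt{2}\,\alpha}{\sqrt{1-\alpha^2}}\sqrt\kappa$ in the linear branch and silently absorbs it --- so your factor of $2$ inside the contractive regime would be acceptable; the unhandled regime, not the constants, is the real gap.
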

The bound is a maximum of a linear and a quadratic convergence term. As
$m$ goes to infinity and/or $\eta$ goes to $0$ the approximation
coefficient $\alpha=\frac\eta{\sqrt{m}}$ in the linear term disappears and we obtain exact
Newton's method, which exhibits quadratic convergence (at least locally around
$\w^*$). However, decreasing $\eta$ 
means increasing $k$ and with it the average computational cost per
machine. Thus, to preserve the
quadratic convergence while maintaining a computational budget per
machine, as the optimization progresses we have to
increase the number of machines $m$ while keeping $k$ fixed. This is
only possible when we correct for the inversion bias, which is done
by determinantal averaging.
\subsection{Distributed data uncertainty quantification}

Here, we consider another example of when computing a compressed
linear representation of the inverse matrix is important. Let $\X$ be
an $n\times d$ matrix where the rows $\x_i^\top$ represent samples
drawn from a population for statistical analysis. The sample
covariance matrix $\Sigmab=\frac1n\X^\top\X$ holds the information
about the relations between the features. Assuming that $\Sigmab$ is invertible,
the matrix $\Sigmab^{-1}$, also known as the precision matrix, is
often used to establish a degree of confidence we have in the data
collection \cite{uncertainty1}. The diagonal
elements of $\Sigmab^{-1}$ are particularly useful since they hold the
variance information of each individual feature. Thus, efficiently
estimating either the entire diagonal, its trace, or some subset of its
entries is of practical interest
\cite{inverse-portfolio,estimating-diagonal,high-performance-uncertainty}.
We consider the distributed setting where data is separately
stored in batches and each local covariance is modeled as:
\vspace{-4mm}
\begin{align*}
  \Sigmabh =
  \frac1k\sum_{i=1}^nb_i\x_i\x_i^\top,\quad\text{where}\quad
  b_i\sim\mathrm{Bernoulli}(k/n).
\end{align*}
For each of the local covariances $\Sigmabh_1,\dots,\Sigmabh_m$, we
compute its compressed uncertainty information:
$F\big((\Sigmabh_t+\frac\eta{\sqrt{m}}\I)^{-1}\big)$, 
where we added a small amount of ridge to ensure invertibility%
\footnote{Since the ridge term vanishes as $m$ goes to
  infinity, we are still estimating the ridge-free quantity
  $F(\Sigmab^{-1})$.}. Here,
$F(\cdot)$ may for example denote the trace or the vector of diagonal
entries. We arrive at the following asymptotically consistent
estimator for $F(\Sigmab^{-1})$:
\begin{align*}
  \hat F_m = \frac{\sum_{t=1}^ma_{t,m}
  F\big((\Sigmabh_t+\frac\eta{\sqrt{m}}\I)^{-1}\big)}
  {\sum_{t=1}^m a_{t,m}},\quad\mbox{where}\quad a_{t,m}= \det\!\big(\Sigmabh_t+\tfrac\eta{\sqrt{m}}\I\big).
\end{align*}
Note that the ridge term $\frac\eta{\sqrt{m}}\I$ decays to zero as $m$
goes to infinity, which is why $\hat F_m\rightarrow
F(\Sigmab^{-1})$. Even though this limit holds for any local sample
size $k$, in practice we should choose $k$ 
sufficiently large so that $\Sigmabh$ is well-conditioned.
In particular, Theorem \ref{t:finite} implies that if $k\geq
2C\eta^{-2}\mu d^2\log^3\!\frac d\delta$, where
$\mu=\frac1d\max_i\|\x_i\|_{\Sigmab^{-1}}^2$, then for
$F(\cdot)=\tr(\cdot)$ we have  
$|\hat F_m - \tr(\Sigmab^{-1})|\leq \frac\eta {\sqrt{m}}\cdot
 \tr(\Sigmab^{-1})$ w.p.~$1-\delta$.

\subsection{Related work}
Many works have considered averaging strategies for combining
distributed estimates, particularly in the context of statistical learning
and optimization. This research is particularly important in
\emph{federated learning}
\cite{konecny-federated16a,konecny-federated16b}, where data are spread
out accross a large network of devices with small local storage and
severely constrained communication bandwidth. Using averaging to
combine local estimates has been studied in a number of learning
settings \cite{mcdonald09,mcdonald10} as well as for first-order
stochastic optimization \cite{parallel-sgd,agarwal-duchi11}.
For example, \cite{zhang-duchi-wainwright13} examine the effectiveness of simple
uniform averaging of emprical risk minimizers and also propose a
bootstrapping technique to reduce the bias.

More recently, distributed averaging methods gained considerable
attention in the context of second-order optimization, where the
Hessian inversion bias is of direct concern. \cite{dane}
propose a distributed approximate Newton-type method (DANE) which
under certain assumptions exhibits low bias. This was later extended
and improved upon by \cite{disco,aide}. The GIANT method of
\cite{distributed-newton} most closely follows our setup from Section
\ref{ss:newton}, providing non-trivial guarantees for uniform
averaging of the Newton step estimates $\pbh_t$ (except they use
with-replacement uniform sampling, whereas we use without-replacement,
but that is typically a negligible difference). A related analysis of
this approach is provided in the context of ridge regression by
\cite{sketched-ridge-regression}. Finally, \cite{naman17} propose a
different estimate of the Hessian inverse by 
using the Neuman series expansion, and use it to construct Newton step
estimates which exhibit low bias when the Hessian is sufficiently
well-conditioned.

Our approach is related to recent developments in determinantal
subsampling techniques (e.g.~volume sampling), which have been shown
to correct the inversion bias in the context of least squares
regression \cite{unbiased-estimates,correcting-bias}. However, despite
recent progress
\cite{unbiased-estimates-journal,leveraged-volume-sampling},
volume sampling is still far too computationally expensive to be
feasible for distributed optimization. Indeed, often uniform
sampling is the only practical choice in this context.

With the exception of the expensive volume sampling-based methods, all
of the approaches discussed above, even under
favorable conditions, use \emph{biased} estimates of the desired
solution (e.g., the exact Newton step). Thus, when the number of
machines grows sufficiently large, with fixed local sample size, the
averaging no longer provides any improvement. This is in contrast to
our determinantal averaging, which converges \emph{exactly} to the
desired solution and requires no expensive subsampling.
Therefore, it can scale with an arbitrarily large number of machines. 

\ifisarxiv\vspace{-2mm}\fi
\section{Expectation identities for determinants and adjugates}
\label{sxn:expectation_identities}

In this section, we prove Theorem \ref{t:key} and Corollary \ref{t:limit}, establishing that determinantal averaging is asymptotically consistent. 
To achieve this, we establish a lemma involving two expectation identities.

For a square $n\times n$ matrix $\A$, we use $\adj(\A)$ to denote its adjugate, defined as an $n\times n$ matrix whose $(i,j)$th entry is $(-1)^{i+j}\det(\A_{-j,-i})$, where $\A_{-j,-i}$ denotes $\A$ without $j$th row and $i$th column. 
The adjugate matrix provides a key connection between the inverse and the determinant because for any invertible matrix $\A$, we have $\adj(\A) = \det(\A)\A^{-1}$. 
In the following lemma, we will also use a formula called Sylvester's theorem, relating the adjugate and the determinant: $\det(\A+\u\v^\top)=\det(\A) + \v^\top\!\adj(\A)\u$.
\begin{lemma}\label{t:det}
  For $\A = \sum_i s_i\Z_i$, where $s_i$ are independently random  and $\Z_i$
  are square and rank-$1$,
\begin{align*}
\mathrm{(a)}\ \    \E\big[\det(\A)\big] =
  \det\!\big(\E[\A]\big)\quad\text{and}\quad\mathrm{(b)}\ \    \E\big[\adj(\A)\big] = \adj\!\big(\E[\A]\big).
\end{align*}
\end{lemma}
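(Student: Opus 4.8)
The plan is to reduce both identities to a single structural fact: because each $\Z_k$ is rank-$1$ and carries its own independent scalar $s_k$, the map $(s_1,\dots,s_m)\mapsto\det(\A)$ is \emph{multilinear}, i.e.\ affine in each coordinate separately, with coefficients not depending on that coordinate. Writing $\A=\sum_{k=1}^m s_k\Z_k$ with $\Z_k=\u_k\v_k^\top$, I would fix an index $k$ and split $\A=\A_{-k}+s_k\u_k\v_k^\top$, where $\A_{-k}=\sum_{l\neq k}s_l\Z_l$ does not involve $s_k$. Sylvester's theorem applied to this rank-$1$ update gives
\[
  \det(\A)=\det(\A_{-k})+s_k\,\v_k^\top\adj(\A_{-k})\,\u_k,
\]
which is affine in $s_k$ with both coefficients free of $s_k$. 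As $k$ was arbitrary, $\det(\A)$ is multilinear, so it expands as $\det(\A)=\sum_{S\subseteq[m]}c_S\prod_{k\in S}s_k$, where each $c_S$ is a fixed scalar determined only by the $\Z_k$'s.

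For part (a) I would take expectations of this expansion term by term. Independence of the $s_k$ gives $\E[\prod_{k\in S}s_k]=\prod_{k\in S}\E[s_k]$, so $\E[\det(\A)]=\sum_S c_S\prod_{k\in S}\E[s_k]$. Since $\E[\A]=\sum_k\E[s_k]\Z_k$ is the same rank-$1$ sum with $s_k$ replaced by $\E[s_k]$, and the coefficients $c_S$ are unchanged, the right-hand side is exactly $\det(\E[\A])$.

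For part (b) I would use that deleting one row and one column preserves the rank-$1$-sum structure. The $(i,j)$ entry of $\adj(\A)$ is $(-1)^{i+j}\det(\A_{-j,-i})$, and $\A_{-j,-i}=\sum_k s_k(\u_k)_{-j}\,((\v_k)_{-i})^\top$ is again a sum of rank-$1$ matrices carrying the same scalars $s_k$. Applying part (a) to this $(n-1)\times(n-1)$ matrix yields $\E[\det(\A_{-j,-i})]=\det\big((\E[\A])_{-j,-i}\big)$; multiplying by the deterministic sign $(-1)^{i+j}$ and reassembling entrywise gives $\E[\adj(\A)]=\adj(\E[\A])$.

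The step needing the most care is the multilinearity claim itself — specifically, verifying that the $s_k$-coefficient $\v_k^\top\adj(\A_{-k})\u_k$ really is independent of $s_k$ (it is, since $\A_{-k}$ omits the $k$-th term), and that the coefficients $c_S$ appearing in the expansions of $\det(\A)$ and of $\det(\E[\A])$ are literally the same. Once multilinearity is in hand, both identities are routine consequences of independence, and no moment or concentration estimates are required.
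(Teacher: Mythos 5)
Your proof is correct, and it reorganizes the argument in a genuinely different way from the paper. The paper proceeds by induction on the number of components: at each step it peels off the last term $s_{n+1}\u\v^\top$ via Sylvester's theorem, and its inductive hypothesis must carry identities (a) and (b) \emph{jointly}, because pushing the expectation through the Sylvester expansion requires $\E\big[\adj\big(\sum_{i\leq n}s_i\Z_i\big)\big]=\adj\big(\E\big[\sum_{i\leq n}s_i\Z_i\big]\big)$ for the $n$-term sum. You instead apply Sylvester's theorem once per coordinate to show that $\det(\A)$ is a multiaffine polynomial $\sum_{S}c_S\prod_{k\in S}s_k$ with coefficients determined only by the $\Z_k$'s, after which (a) reduces to the factorization $\E\big[\prod_{k\in S}s_k\big]=\prod_{k\in S}\E[s_k]$ for independent variables applied termwise, together with the observation that $\det(\E[\A])$ evaluates the same polynomial at $(\E[s_1],\dots,\E[s_m])$; your (b) then follows from (a) entrywise exactly as in the paper, but your proof of (a) never mentions the adjugate of a random matrix. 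Your route buys a cleaner logical structure — it isolates the real mechanism (expectation commutes with multiaffine functions of independent variables) and makes transparent that rank-$1$ is used only to cap the degree in each $s_k$ at one — while the paper's induction never writes down the $2^m$-term expansion, so it sidesteps any bookkeeping about rearranging exponentially many terms (harmless here, but worth a word if the $s_k$ are merely integrable). Two small points to make explicit in a write-up: the identity $\det(\A+\u\v^\top)=\det(\A)+\v^\top\adj(\A)\u$ holds for arbitrary, possibly singular $\A$, which you need since $\A_{-k}$ may be singular; and the minors $(\u_k)_{-j}\,((\v_k)_{-i})^\top$ appearing in part (b) may have rank $0$ rather than $1$, which your multilinearity argument tolerates (degree at most one in each $s_k$), so the rank-$1$ hypothesis should be read there as rank at most $1$.
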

\begin{proof}
We use induction over the number of components in the sum. If there is
only one component, i.e., $\A=s\Z$, then $\det(\A)=0$ a.s.~unless $\Z$
is $1\!\times\!1$, in which case  (a) is trivial, and $(b)$ follows
similarly.  Now, suppose we showed the hypothesis when the number of 
components is $n$ and let $\A=\sum_{i=1}^{n+1}s_i\Z_i$. 
Setting $\Z_{n+1}=\u\v^\top$, we have:
\vspace{-2mm}
\begin{align*}
  \E\big[\det(\A)\big]
  &= \E\bigg[\det\!\Big(\sum_{i=1}^ns_i\Z_i+s_{n+1}\u\v^\top\Big)\bigg]
\\\text{(Sylvester's Theorem)}\quad &=\E\bigg[\det\!\Big(\sum_{i=1}^ns_i\Z_i\Big) +
     s_{n+1}\v^\top\!\adj\!\Big(\sum_{i=1}^ns_i\Z_i\Big)\u\bigg]
  \\ \text{(inductive hypothesis)}\quad &=\det\!\bigg(\E\Big[\sum_{i=1}^ns_i\Z_i\Big]\bigg)
       +\E[s_{n+1}]\,\v^\top\!\adj\!\bigg(\E\Big[\sum_{i=1}^ns_i\Z_i\Big]\bigg)\u
\\\text{(Sylvester's Theorem)}\quad
  &=\det\!\bigg(\E\Big[\sum_{i=1}^ns_i\Z_i\Big]+\E[s_{n+1}]\,\u\v^\top\bigg)\
    =\ \det\!\big(\E[\A]\big),
\end{align*}
showing (a). Finally, (b) follows by applying (a) to each entry  $\adj(\A)_{ij}=(-1)^{i+j}\det(\A_{-j,-i})$.
\end{proof}
Similar expectation identities for the determinant have been given
before \cite{expected-generalized-variance,correcting-bias,dpp-intermediate}.
None of them, however, apply to the random matrix $\A$ as defined in
Lemma \ref{t:det}, or even to the special case we use for analyzing
distributed Newton's method. Also, our proof method is quite
different, and somewhat simpler, than those used in prior work. 
To our knowledge, the extension of determinantal expectation to the adjugate matrix has not previously been pointed out. 

We next prove Theorem \ref{t:key} and Corollary \ref{t:limit} as consequences of Lemma~\ref{t:det}.

\begin{proofof}{Theorem}{\ref{t:key}}
When $\A$ is invertible, its adjugate is given by $\adj(\A)=\det(\A)\A^{-1}$, so the lemma implies that
\begin{align*}
\E\big[\!\det(\A)\big]\big(\E[\A]\big)^{-1} =
  \det\!\big(\E[\A]\big)\big(\E[\A]\big)^{-1}=\adj(\E[\A])
  = \E\big[\!\adj(\A)\big] = \E\big[\!\det(\A)\A^{-1}\big],
\end{align*}
from which Theorem \ref{t:key} follows immediately.
\end{proofof}
\begin{proofof}{Corollary}{\ref{t:limit}}
The subsampled Hessian matrix used in Corollary
\ref{t:limit} can be written as:
  \begin{align*}
\nabla^2\!\Lch(\w) = \frac1k\sum_i b_i\ell_i''(\w^\top\x_i)\,
  \x_i\x_i^\top\, +\lambda\sum_{i=1}^d\e_i\e_i^\top\,\defeq\, \Hbh,
  \end{align*}
so, letting $\Hbh_t =\nabla^2\!\Lch_t(\w)$, Corollary \ref{t:limit} follows from Theorem \ref{t:key}
and the law of large numbers:
\begin{align*}
  \frac{\sum_{t=1}^ma_t\, \pbh_t
  }{\sum_{t=1}^ma_t} &= 
  \frac{\frac1m\sum_{t=1}^m \det\!\big(\Hbh_t\big)
\Hbh_t^{-1}\nabla\!\Lc(\w)}
{\frac1m\sum_{t=1}^m \det\!\big(\Hbh_t\big)}
     \underset{m\rightarrow\infty}{\longrightarrow}
\frac{\E\big[\!\det(\Hbh)\Hbh^{-1}\big]}
{\E\big[\!\det(\Hbh)\big]}\nabla\!\Lc(\w)
 =\nabla^{-2}\!\Lc(\w)\,\nabla\!\Lc(\w),
\end{align*}
which concludes the proof.
 \end{proofof}

\section{Finite-sample convergence analysis}
\label{s:error}
In this section, we prove Theorem \ref{t:finite} and Corollary \ref{t:error}, establishing that 
determinantal averaging exhibits a $1/\sqrt{m}$ convergence rate, where $m$ is the number of sampled matrices (or the number of machines in distributed Newton's method). 
For this, we need a tool from random matrix theory.
\label{s:error}
\begin{lemma}[Matrix Bernstein \cite{matrix-tail-bounds}]\label{t:bernstein}
  Consider a finite sequence $\{\X_i\}$ of independent, random,
  self-adjoint matrices with dimension d such that   $\E[\X_i] =
  \zero$ and $\lambda_{\max}(\X_i)\leq R$ almost surely. If
the sequence satisfies $\big\|\!\sum_i\E[\X_i^2]\big\|\leq \sigma^2$, then
  the following inequality holds for all $x\geq 0$:
  \begin{align*}
    \Pr\!\bigg(\lambda_{\max}\Big(\sum\nolimits_i\X_i\Big) \geq x\bigg) \leq
    \begin{cases}
      d\,\ee^{-\frac{x^2}{4\sigma^2}}
      &\text{for }x\leq \frac{\sigma^2}{R};
      \\       d\,\ee^{-\frac{x}{4R}}
      &\text{for }x\geq \frac{\sigma^2}{R}.
    \end{cases}
  \end{align*}
\end{lemma}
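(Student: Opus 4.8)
The plan is to follow the \emph{matrix Laplace transform method}, which reduces the tail bound to controlling a single scalar quantity: the trace of the matrix moment generating function of the sum. Writing $\Y=\sum_i\X_i$, Markov's inequality applied to the increasing map $t\mapsto\ee^{\theta t}$, together with the elementary bound $\exp(\theta\lambda_{\max}(\Y))\le\tr\exp(\theta\Y)$, gives for every $\theta>0$
\[
\Pr\big(\lambda_{\max}(\Y)\ge x\big)\ \le\ \ee^{-\theta x}\,\E\big[\tr\exp(\theta\Y)\big].
\]
The task therefore becomes to bound $\E[\tr\exp(\theta\sum_i\X_i)]$ in terms of $R$ and $\sigma^2$, and then to optimize the free parameter $\theta$.

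Two ingredients drive the bound on this trace-exponential. First — and this is the genuinely matrix-specific step — I would invoke the subadditivity of the matrix cumulant generating function: by Lieb's concavity theorem (concavity of $\A\mapsto\tr\exp(\H+\log\A)$ on positive definite $\A$) and independence,
\[
\E\Big[\tr\exp\Big(\theta\sum\nolimits_i\X_i\Big)\Big]\ \le\ \tr\exp\Big(\sum\nolimits_i\log\E\big[\ee^{\theta\X_i}\big]\Big).
\]
Second, I would control each per-term log-mgf. Since the scalar function $z\mapsto(\ee^{\theta z}-\theta z-1)/z^2$ (extended by continuity at $0$) is monotonically increasing, the hypothesis $\lambda_{\max}(\X_i)\le R$ lets me transfer the scalar inequality $\ee^{\theta z}\le 1+\theta z+g(R)z^2$, with $g(R)=(\ee^{\theta R}-\theta R-1)/R^2$, to the matrix level via the spectral mapping and then take expectations; using $\E[\X_i]=\zero$ and $\I+\M\preceq\exp(\M)$ this yields $\log\E[\ee^{\theta\X_i}]\preceq g(R)\,\E[\X_i^2]$.

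Assembling these, together with monotonicity of $\M\mapsto\tr\exp(\M)$ under the L\"owner order, the bound $\tr\exp(\M)\le d\exp(\lambda_{\max}(\M))$, and the variance hypothesis $\lambda_{\max}(\sum_i\E[\X_i^2])\le\sigma^2$, gives
\[
\Pr\big(\lambda_{\max}(\Y)\ge x\big)\ \le\ d\,\exp\Big(-\theta x+g(R)\,\sigma^2\Big).
\]
Minimizing over $\theta$ at $\theta=\frac1R\log(1+\tfrac{Rx}{\sigma^2})$ produces the Bennett-type bound $d\exp(-\tfrac{\sigma^2}{R^2}h(\tfrac{Rx}{\sigma^2}))$ with $h(u)=(1+u)\log(1+u)-u$. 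The final step is purely scalar: using $h(u)\ge\frac{u^2}{2(1+u/3)}$ gives $\Pr(\lambda_{\max}(\Y)\ge x)\le d\exp(-\frac{x^2/2}{\sigma^2+Rx/3})$, and splitting at the threshold $x=\sigma^2/R$ — where $\sigma^2+Rx/3\le 2\sigma^2$ in the regime $x\le\sigma^2/R$ and $\sigma^2+Rx/3\le 2Rx$ in the regime $x\ge\sigma^2/R$ — yields the two stated cases with denominators $4\sigma^2$ and $4R$. I expect the main obstacle to be the subadditivity step: Lieb's theorem is the one deep, non-elementary input, whereas everything else is either Markov's inequality or a monotone scalar estimate transferred spectrally. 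This is precisely the argument of \cite{matrix-tail-bounds}, which we cite for the statement rather than reproving.
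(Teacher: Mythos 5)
Your proposal is correct, and it reconstructs exactly the argument behind this lemma: the paper itself does not prove the statement but imports it from \cite{matrix-tail-bounds}, whose proof is precisely the matrix Laplace transform method you describe (Markov plus $\exp(\theta\lambda_{\max}(\Y))\le\tr\exp(\theta\Y)$, subadditivity of the matrix cgf via Lieb's concavity theorem, the spectral transfer of the scalar bound $\ee^{\theta z}\le 1+\theta z+g(R)z^2$ using $\E[\X_i]=\zero$, and optimization at $\theta=\frac1R\log(1+\frac{Rx}{\sigma^2})$). Your final scalar bookkeeping also checks out: $h(u)\ge\frac{u^2}{2(1+u/3)}$ and the split at $x=\sigma^2/R$ give denominators $\sigma^2+Rx/3\le 2\sigma^2$ and $\le 2Rx$ in the respective regimes, recovering the stated constants $4\sigma^2$ and $4R$.
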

The key component of our analysis is bounding the moments of the
determinant and adjugate of a certain class of random matrices. This
has to be done carefully, because higher moments of the determinant
grow more rapidly than, e.g., for a sub-gaussian random
variable. For this result, we do not require that the individual
components $\Z_i$ of matrix $\A$ be rank-1, but we impose several
additional boundedness assumptions. In the proof below we apply the concentration inequality
of Lemma \ref{t:bernstein} twice: first to the random matrix $\A$
itself, and then also to its trace, which allows finer control over
the determinant.
\begin{lemma}\label{t:moments}
Let $\A=\frac1\gamma\sum_ib_i\Z_i\,+\B$, where
$b_i\sim\mathrm{Bernoulli}(\gamma)$ are independent, whereas $\Z_i$ and $\B$
are $d\times d$ psd matrices such that $\|\Z_i\|\leq \epsilon$ for all $i$ and $\E[\A]=\I$. If
$\gamma\geq 8\epsilon d\eta^{-2}(p+\ln d)$ for $0<\eta\leq0.25$ and
$p\geq 2$, then 
\begin{align*}
\mathrm{(a)}\ \    \E\Big[\big|\det(\A)-1\big|^p\Big]^{\frac1p}\leq 5\eta\quad
  \text{and}\quad
\mathrm{(b)}\ \    \E\Big[\big\|\adj(\A)-\I\big\|^p\Big]^{\frac1p}\leq 9\eta.
\end{align*}
\end{lemma}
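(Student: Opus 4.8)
The plan is to control both quantities via concentration of the random matrix $\A$ around its mean $\I$, then translate matrix concentration into scalar concentration for the determinant and adjugate. The key object is the deviation matrix $\A-\I=\frac1\gamma\sum_ib_i\Z_i+\B-\I$. Writing $\X_i = \frac1\gamma(b_i-\gamma)\Z_i$ (so $\E[\X_i]=\zero$ and $\sum_i\X_i=\A-\I$), I would apply the Matrix Bernstein inequality of Lemma~\ref{t:bernstein}. The almost-sure bound is $\lambda_{\max}(\X_i)\leq\frac1\gamma\|\Z_i\|\leq\epsilon/\gamma=:R$, and the variance proxy comes from $\sum_i\E[\X_i^2]=\frac{1-\gamma}{\gamma}\sum_i\Z_i^2\preceq\frac\epsilon\gamma\sum_i\Z_i\preceq\frac\epsilon\gamma\I$ (using $\Z_i\preceq\epsilon\I$ for the first step and $\E[\A]=\I\Rightarrow\frac1\gamma\sum_i\Z_i\preceq\I$ combined with $\B\succeq\zero$ for the second), giving $\sigma^2\leq\epsilon/\gamma$. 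The assumed lower bound on $\gamma$ is exactly calibrated so that applying Bernstein at level $x=\eta$ (and also to $-\A$ for the lower tail) yields $\|\A-\I\|\leq\eta$ with probability at least $1-2d\,\ee^{-\gamma\eta^2/(4\epsilon)}\geq 1-2\ee^{-p/\text{const}}$; I would state this as a high-probability event $E$ on which all eigenvalues of $\A$ lie in $[1-\eta,1+\eta]$.

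\textbf{Next I would convert the eigenvalue bound into deterministic bounds on $|\det(\A)-1|$ and $\|\adj(\A)-\I\|$ on the event $E$.} If the eigenvalues $\lambda_1,\dots,\lambda_d$ satisfy $|\lambda_j-1|\leq\eta$, then $\det(\A)=\prod_j\lambda_j\in[(1-\eta)^d,(1+\eta)^d]$ — but this is the main obstacle, because $(1\pm\eta)^d$ is enormous unless $\eta\lesssim 1/d$, whereas the hypothesis only forces $\gamma\gtrsim\epsilon d/\eta^2$, i.e.\ $\eta$ is order-constant. So a crude eigenvalue product bound is far too lossy. The correct route is to control the \emph{trace} rather than the spectral norm: writing $\det(\A)=\exp(\sum_j\ln\lambda_j)$ and expanding $\ln\lambda_j=(\lambda_j-1)-\tfrac12(\lambda_j-1)^2+\cdots$, the leading term is $\tr(\A-\I)=\sum_j(\lambda_j-1)$, which is a \emph{scalar} sum of independent mean-zero contributions and therefore concentrates at the much finer scale $1/\sqrt\gamma$ rather than the spectral scale. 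This is exactly why the excerpt says Bernstein must be applied ``twice: first to the random matrix $\A$ itself, and then also to its trace.''

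\textbf{So the refined plan for part (a) is:} apply scalar Bernstein to $\tr(\A-\I)=\frac1\gamma\sum_i(b_i-\gamma)\tr(\Z_i)$; since $\tr(\Z_i)\leq d\|\Z_i\|\leq\epsilon d$ the per-term range is $\epsilon d/\gamma$ and the variance is at most $\frac\epsilon\gamma\sum_i\tr(\Z_i)\leq\frac\epsilon\gamma\cdot d$ (using $\tr(\frac1\gamma\sum_i\Z_i)\leq\tr(\I)=d$), yielding $|\tr(\A-\I)|\lesssim\eta$ on a high-probability event. On the intersection of this event with $E$, I expand $\ln\det(\A)=\tr(\A-\I)-\frac12\sum_j(\lambda_j-1)^2+O(\eta\sum_j(\lambda_j-1)^2)$ and bound the quadratic remainder by $\sum_j(\lambda_j-1)^2\leq\eta\sum_j|\lambda_j-1|$; the sum $\sum_j|\lambda_j-1|$ need not be small, but it is $\leq d\cdot\eta$ only in the worst case, so I instead bound $\sum_j(\lambda_j-1)^2=\|\A-\I\|_F^2$ and control the Frobenius norm separately via a second-moment computation, $\E\|\A-\I\|_F^2=\frac{1-\gamma}\gamma\sum_i\tr(\Z_i^2)\leq\frac\epsilon\gamma\,\tr(\frac1\gamma\sum_i\Z_i\cdot\gamma)\lesssim\eta^2/d\cdot d=\eta^2$, using the calibration of $\gamma$. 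Thus $\ln\det(\A)=O(\eta)$, giving $|\det(\A)-1|=O(\eta)$ on the good event, and the moment bound $\E[|\det(\A)-1|^p]^{1/p}\leq 5\eta$ follows by integrating the tail (the high-probability threshold degrades like $p$, but the $(p+\ln d)$ factor in the hypothesis on $\gamma$ absorbs exactly this growth — this is the quantitative heart of the argument). For part (b), since $\adj(\A)=\det(\A)\A^{-1}$ on $E$, I write $\adj(\A)-\I=(\det(\A)-1)\A^{-1}+(\A^{-1}-\I)$ and bound $\|\A^{-1}\|\leq(1-\eta)^{-1}$ and $\|\A^{-1}-\I\|\leq\eta/(1-\eta)$ on $E$, combining with part (a) and taking $L^p$ norms via the triangle inequality and Hölder to reach the constant $9\eta$.
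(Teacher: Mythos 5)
Your skeleton is exactly the paper's: matrix Bernstein on $\A-\I$, a second scalar Bernstein application to $\tr(\A-\I)$, and the observation that the crude eigenvalue-product bound $(1\pm\eta)^d$ is too lossy. But there are two concrete gaps in the execution. First, in part (a) you propose to control the quadratic correction $\sum_j(\lambda_j-1)^2=\|\A-\I\|_F^2$ by the second-moment computation $\E\|\A-\I\|_F^2\lesssim\eta^2$. An expectation bound gives only Markov-type polynomial tails, which cannot produce an $L^p$ bound of order $\eta^p$: for the moment integral $\int_0^\infty px^{p-1}\Pr(|X|\geq x)\,dx$ with $p$ as large as $\log\frac d\delta$, you need tails decaying like $\ee^{-cxp/\eta^2}$ at \emph{every} level $x$, not just a variance statement. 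The paper's fix stays inside your framework: bound $\sum_j\delta_j^2\leq d\,\|\I-\A\|^2$ and apply matrix Bernstein at the fine scale $z=\sqrt{x/(2d)}$; the calibration $\gamma\geq 8\epsilon d\eta^{-2}(p+\ln d)$ makes the exponent $z^2\gamma/(4\epsilon)\gtrsim xp/\eta^2$, so the factor $d$ you pay in passing from Frobenius to spectral norm is absorbed (the paper packages the Taylor expansion as the clean inequality $\det(\A)\,\ee^{\tr(\I-\A)}\geq\prod_j(1-\delta_j^2)$, which avoids remainder bookkeeping). Relatedly, your good-event framing does not address the upper tail: off the event, $\det(\A)$ is controlled only by $\ee^{\tr(\A-\I)}$, whose tail via the trace bound is polynomial, roughly $(1+x)^{-2p/\eta^2}$, and the integration over $x\in[\ee-1,\infty)$ converges with constant $O(\eta^{2p})$ precisely because $\eta\leq 0.25$; this split integration is where the constant $5$ comes from and is the part your sketch defers.

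Second, your route to part (b) has a genuine failure mode. The decomposition $\adj(\A)-\I=(\det(\A)-1)\A^{-1}+(\A^{-1}-\I)$ presumes $\A$ invertible, but in the lemma $\B$ is only psd: on the event that all $b_i=0$ (probability $(1-\gamma)^n>0$), $\A=\B$ may be singular, and more generally $\|\A^{-1}\|$ has no controlled moments near singularity, so the global H\"older step $\E\big[|\det(\A)-1|^p\|\A^{-1}\|^p\big]$ can be infinite even though $\adj(\A)$ itself remains bounded there (its eigenvalues are products of $d-1$ eigenvalues of $\A$, and $\det(\A)\to0$ compensates as $\lambda_{\min}(\A)\to0$). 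The paper instead sandwiches $\frac{\det(\A)}{\lambda_{\max}(\A)}\,\I\preceq\adj(\A)\preceq\frac{\det(\A)}{\lambda_{\min}(\A)}\,\I$ and bounds the tail probabilities of $\lambda_{\max}(\adj(\A))$ and $\lambda_{\min}(\adj(\A))$ directly, e.g.\ via $\lambda_{\max}(\adj(\A))\leq\ee^{\tr(\A-\I)}/(1+\delta_{\min})$ with a union bound over the trace and edge-eigenvalue Bernstein events, so no moment of $\A^{-1}$ is ever needed. Your decomposition would be fine restricted to the good event, but you would still have to redo the tail-probability analysis off it, at which point you have reproduced the paper's argument anyway.
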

\begin{proof}
We start by proving (a). Let $X=\det(\A)-1$ and denote $\one_{[a,b]}$ as the indicator variable of the event that
  $X\in[a,b]$. Since $\det(\A)\geq 0$, we have: 
  \begin{align}
    \E\big[|X|^p\big]
    &  = \E\big[(-X)^p\cdot\one_{[-1,0]}\big]
      + \E\big[X^p\cdot\one_{[0,\infty]}\big]\nonumber
    \\ & \leq \eta^p\ +\
\int_{\eta}^{1} px^{p-1}\Pr(-X\geq x)dx
\ +\ \int_0^\infty px^{p-1}\Pr(X\geq x)dx.\label{eq:integrals}
  \end{align}
 Thus it suffices to bound the two integrals. We will start with the
 first one. Let $\X_i=(1-\frac{b_i}\gamma)\Z_i$. We use the matrix Bernstein
inequality to control the extreme eigenvalues of the matrix
$\I-\A = \sum_i\X_i$ (note that matrix $\B$ cancels out because
$\I=\E[\A]=\sum_i \Z_i+\B$). To do this, observe that $\|\X_i\|\leq 
\epsilon/\gamma$ and, moreover, $\E\big[(1-\frac {b_i}\gamma)^2\big] =
\frac1\gamma-1\leq \frac1\gamma$, so:
\begin{align*}
  \Big\| \sum_i\E[\X_i^2]\Big\|
  &= \Big\|\sum_i\E\big[(1-\tfrac
  {b_i}\gamma)^2\big]\Z_i^2\Big\|
\leq \frac1\gamma \cdot \Big\|\sum_i\Z_i^2\Big\|\leq
    \frac\epsilon\gamma\cdot\Big\|\sum_i\Z_i\Big\|\leq
    \frac\epsilon\gamma. 
\end{align*}
Thus, applying Lemma \ref{t:bernstein} we
conclude that for any $z\in\big[\frac\eta {\sqrt{2d}},1\big]$:
 \begin{align}
   \Pr\!\Big(\|\I-\A\|\geq z\Big)\leq
   2d\,\ee^{-\frac{z^2\gamma}{4\epsilon}}
\leq 2\ee^{\ln(d) - z^2\frac{2d}{\eta^2}(p+\ln d)}\leq
2\ee^{-z^2\frac{2dp}{\eta^2}}.\label{eq:mtail}
 \end{align}
Conditioning on the high-probability event given by \eqref{eq:mtail}
 leads to the lower bound $\det(\A)\geq (1-z)^d$ which is very
 loose. To improve on it, we use the following inequality, where
 $\delta_1,\dots,\delta_d$ denote the eigenvalues of $\I-\A$:
 \begin{align*}
   \det(\A)\ee^{\tr(\I-\A)}=\prod_i(1-\delta_i)\ee^{\delta_i}\geq
   \prod_i(1-\delta_i)(1+\delta_i)=\prod_i(1-\delta_i^2).
 \end{align*}
 Thus we obtain a tighter bound when $\det(\A)$ is multiplied by
 $\ee^{\tr(\I-\A)}$, and now it suffices to upper bound the
 latter. This is a simple application of the scalar Bernstein's
 inequality (Lemma \ref{t:bernstein} with $d=1$)
for the random variables $X_i=\tr(\X_i)\leq
 \epsilon/\gamma\leq \frac{\eta^2}{8dp}$, which satisfy $\sum_i\E[X_i^2]\leq
 \frac\epsilon\gamma\, \tr\big(\sum_i\Z_i\big)\leq \frac{\epsilon d}\gamma\leq\frac{\eta^2}{8p}$.
Thus the scalar Bernstein's inequality states that
 \begin{align}
    \max\Big\{\Pr\!\big(\tr(\A-\I)\geq y\big),\
   \ \Pr\!\big(\tr(\A-\I)\leq -y\big)\Big\}\ \leq\ 
      \begin{cases}
\ee^{-y^2\frac{2p}{\eta^2}}
      &\text{for }y\leq d;
      \\       \ee^{-y\frac{2dp}{\eta^2}}
      &\text{for }y\geq d.
    \end{cases}\label{eq:stail-det}
 \end{align}
 Setting $y=\frac x2$ and $z=\sqrt{\!\frac x{2d}}$ and
 taking a union bound over the appropriate high-probability events
 given by \eqref{eq:mtail} and \eqref{eq:stail-det}, we conclude that
 for any $x\in  [\eta,1]$: 
 \begin{align*}
   \det(\A)\geq (1-z^2)^d\exp\big(\tr(\A-\I)\big)\geq
   \big(1-\tfrac x2\big)\mathrm{e}^{-\frac x 2}\geq 1-x,\quad\text{with
   prob. } 1-3\ee^{-x^2\frac{p}{2\eta^2}}.
 \end{align*}
 Thus, for $X=\det(\A)-1$ and $x\in[\eta,1]$ we obtain that $\Pr(-X\geq x)\leq
 3\ee^{-x^2\!\frac{ p}{2\eta^2}}$, and consequently,
 \begin{align*}
   \int_{\eta}^1\!\!p x^{p-1}\,\Pr\!\big(-X\geq
   x\big) dx
   &\ \leq\ 3p \int_\eta^1\!\!
     x^{p-1}\,\ee^{-x^2\!\frac{ p}{2\eta^2}} dx
\leq 3p\,\sqrt{\pi\tfrac{2\eta^2}p}\cdot \int_{-\infty}^{\infty}\!\!
     |x|^{p-1}\,\frac{\ee^{-x^2\!\frac{ p}{2\eta^2}}}{\sqrt{2\pi\eta^2/p}}
     dx\\[-1mm]
&\leq 3\sqrt{2\pi\eta^2p}\cdot \big(\tfrac{\eta^2}{p}\, p\big)^{\frac
     {p-1}2}\ =\ 3\sqrt{2\pi p}\cdot  \eta^p.
 \end{align*}
 We now move on to bounding the remaining integral from
 \eqref{eq:integrals}. 
Since determinant is the product of eigenvalues, we have 
$  \det(\A) = \det(\I+\A-\I)\leq \ee^{\tr(\A-\I)}$, so we can use the
 Bernstein bound of \eqref{eq:stail-det} w.r.t.~$\A-\I$. It follows
 that:
\begin{align*}
  \int_0^\infty\!\! px^{p-1}\Pr(X\geq x)dx
  &\leq \int_0^\infty\!\! px^{p-1}\Pr\big(\ee^{\tr(\A-\I)}\geq 1+x\big)dx
  \\ & \leq \int_0^{\ee^d-1}
     \!\!px^{p-1}\ee^{-\ln^2(1+x)\frac {2p}{\eta^2}}dx\ +\
     \int_{\ee^d-1}^\infty \!\!px^{p-1}\ee^{-\ln(1+x)\frac {2dp}{\eta^2}}dx
  \\ & \leq \int_0^{\ee-1}
     \!\!px^{p-1}\ee^{-\ln^2(1+x)\frac {2p}{\eta^2}}dx\ +\
     \int_{\ee-1}^\infty \!\!px^{p-1}\ee^{-\ln(1+x)\frac {2p}{\eta^2}}dx,
\end{align*}
because $\ln^2(1+x)\geq \ln(1+x)$ for $x\geq \ee-1$.
Note that $\ln^2(1+x)\geq
x^2/4$ for $x\in[0,\ee-1]$, so
\begin{align*}
  \int_0^{\ee-1} \!\!px^{p-1}\ee^{-\ln^2(1+x)\frac {2p}{\eta^2}}dx
  &\leq   \int_0^{\ee-1} \!\!px^{p-1}\ee^{-x^2\!\frac{p}{2\eta^2}}dx
\leq \sqrt{2\pi p}\cdot\eta^p.
\end{align*}
In the interval $x\in[\ee-1,\infty]$, we have:
\begin{align*}
  \int_{\ee-1}^\infty \!\! px^{p-1}\ee^{-\ln(1+x)\frac {2p}{\eta^2}}dx
  &=p\int_{\ee-1}^\infty \!\! \ee^{(p-1)\ln(x)-\ln(1+x)\frac {2p}{\eta^2}}dx
\leq p\int_{\ee-1}^\infty \!\! \ee^{-\ln(1+x)\frac {p}{\eta^2}}dx\\
&       \leq p\int_1^\infty \!\! \big(\tfrac 1{1+x}\big)^{\frac {p}{\eta^2}}dx
       =      \frac{p}{\frac {p}{\eta^2}-1}\big(\tfrac12\big)^{\frac {p}{\eta^2}-1}
\leq p\cdot\Big(\big(\tfrac12\big)^{\frac1{\eta^2}}\Big)^{p}\
     \leq\ p\cdot\big(\eta^2\big)^{p},
  \end{align*}
where the last inequality follows because
$(\frac12)^{\frac1{\eta^2}}\leq \eta^2$. Noting that
$(1+4\sqrt{2\pi p}+p)^{\frac1p}\leq 5$ for any $p\geq 2$ concludes the
proof of (a). The proof of (b), given in Appendix \ref{a:moments},
follows similarly as above because for any positive
definite $\A$ we have $\frac{\det(\A)}{\lambda_{\max}(\A)}\cdot\I\preceq
\adj(\A)\preceq \frac{\det(\A)}{\lambda_{\min}}\cdot \I$.
\end{proof}
Having obtained bounds on the higher moments, we can now convert them to
convergence with high probability for the average of determinants and
the adjugates. Since determinant is a scalar variable, this follows
by using standard arguments. On the other hand, for the adjugate
matrix we require a somewhat less standard matrix extension of the
Khintchine/Rosenthal inequalities (see Appendix~\ref{a:moments}).
\begin{corollary}\label{c:moments}
There is $C>0$ s.t.~for $\A$ as in Lemma \ref{t:moments}
with all $\Z_i$ rank-$1$ and $\gamma\geq C\epsilon
d\eta^{-2}\log^3\!\frac d\delta$, 
  \begin{align*}
    (a)\ \Pr\!\bigg(\Big|\frac1m\sum_{t=1}^m\det(\A_t) - 1\Big|\geq
\!\frac\eta{\sqrt{m}}\bigg)\leq\delta\quad\text{and}\quad
   (b)\   \Pr\!\bigg(\Big\|\frac1m\sum_{t=1}^m\adj(\A_t) - \I\Big\|\geq
    \!\frac\eta{\sqrt{m}}\bigg)\leq\delta,
  \end{align*}
  where $\A_1,\dots,\A_m$ are independent copies of $\A$.
\end{corollary}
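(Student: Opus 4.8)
The plan is to combine three ingredients: the expectation identities of Lemma~\ref{t:det} to center the relevant quantities \emph{exactly}, the high-moment bounds of Lemma~\ref{t:moments} to control each summand, and Rosenthal-type moment inequalities together with Markov's inequality to convert moment control into the desired $1/\sqrt m$ high-probability bound. First I would center. Since the corollary assumes all $\Z_i$ are rank-$1$, and the fixed term $\B$ decomposes into rank-$1$ pieces with deterministic coefficients, the matrix $\A=\frac1\gamma\sum_ib_i\Z_i+\B$ has exactly the form required by Lemma~\ref{t:det}; as $\E[\A]=\I$, that lemma yields $\E[\det(\A)]=\det(\I)=1$ and $\E[\adj(\A)]=\adj(\I)=\I$. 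Consequently $Y_t=\det(\A_t)-1$ are i.i.d.\ mean-zero scalars and $\M_t=\adj(\A_t)-\I$ are i.i.d.\ mean-zero self-adjoint matrices, which is precisely what Rosenthal-type bounds require. (Rank-$1$ is used only for this centering step; the moment bounds below do not need it.)

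For~(a), I would invoke Lemma~\ref{t:moments}(a) with moment order $p$ and an internal accuracy $\eta_0$ (both fixed at the end), giving $\E[|Y|^p]^{1/p}\le5\eta_0$ whenever $\gamma\ge8\epsilon d\eta_0^{-2}(p+\ln d)$. The scalar Rosenthal inequality for i.i.d.\ mean-zero variables then bounds $\E[|\frac1m\sum_tY_t|^p]^{1/p}$ by $C\,p\,\eta_0/\sqrt m$ (the variance term scales like $\sqrt m\,\sigma$ with $\sigma\le5\eta_0$, and the heavy-tail term is of lower order once divided by $m$, using $m^{1/p-1/2}\le1$). Markov at threshold $\eta/\sqrt m$ then yields a bound of the form $(C\,p\,\eta_0/\eta)^p$; taking $\eta_0\asymp\eta/p$ makes the base at most $1/e$, and $p\asymp\ln\frac1\delta$ makes $(1/e)^p\le\delta$. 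Substituting $\eta_0\asymp\eta/p$ and $p\asymp\ln\frac1\delta$ into $\gamma\ge8\epsilon d\eta_0^{-2}(p+\ln d)$ collapses the requirement to $\gamma\ge C\epsilon d\eta^{-2}\log^3\frac d\delta$, as stated.

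Part~(b) is the main obstacle, because the operator norm is not directly amenable to a scalar Rosenthal argument. I would route through the Schatten norm: fix $q=p+\lceil\log d\rceil$, so that $q\ge\log d$ and hence $\|\cdot\|\le\|\cdot\|_{S_q}\le d^{1/q}\|\cdot\|\le e\,\|\cdot\|$, meaning operator-norm and Schatten-$q$ moments coincide up to the constant $e$. Applying Lemma~\ref{t:moments}(b) with moment order $q$ gives $\E[\|\M\|_{S_q}^q]^{1/q}\le 9e\,\eta_0$. A non-commutative Rosenthal/Khintchine inequality in $S_q$ (developed in Appendix~\ref{a:moments}) then controls $\E[\|\frac1m\sum_t\M_t\|_{S_q}^q]^{1/q}$ by a variance term of order $\sqrt q/\sqrt m$ and a maximum term of order $q\,m^{1/q}/m$, each times $9e\,\eta_0$; passing back to the operator norm costs only another factor $e$. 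Markov at threshold $\eta/\sqrt m$ produces a base of order $q\,\eta_0/\eta$, so $\eta_0\asymp\eta/q$ forces it below $1/e$ and $p\asymp\ln\frac1\delta$ gives $(1/e)^q\le e^{-p}\le\delta$. The essential accounting is that the dimension enters \emph{additively}, as $q=p+\log d\asymp\log\frac d\delta$: the requirement $\gamma\ge8\epsilon d\eta_0^{-2}(q+\ln d)$ with $\eta_0\asymp\eta/q$ becomes $\gamma\ge C\epsilon d\eta^{-2}q^3\asymp C\epsilon d\eta^{-2}\log^3\frac d\delta$, so the single logarithmic dimension factor is absorbed into the cube and (b) obeys the same sample-size requirement as (a).

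The delicate points I expect are all in~(b): pinning down the precise constants in the non-commutative Rosenthal inequality, verifying that the maximum (large-deviation) term is genuinely lower order after division by $m$ (via $m^{1/q-1/2}\le1$ for $q\ge2$), and checking that the Schatten variance proxy, e.g.\ $\|\E[\M^2]\|$, is controlled by the same $O(\eta_0)$ moment estimate. Given those, both parts reduce to the same moment-to-tail argument, and the uniform choice $p\asymp\ln\frac1\delta$ delivers the claimed $\log^3\frac d\delta$ threshold.
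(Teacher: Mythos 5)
Your proposal is correct and follows essentially the same route as the paper's proof: exact centering of $\det(\A_t)$ and $\adj(\A_t)$ via Lemma~\ref{t:det} (using the rank-$1$ structure), the moment bounds of Lemma~\ref{t:moments} at internal accuracy $\sigma\asymp\eta/p$, a Rosenthal-type inequality (scalar for (a), matrix for (b)), and Markov with $p\asymp\log\frac d\delta$, which is exactly how the $\log^3\frac d\delta$ threshold arises in Appendix~\ref{a:moments}. The only cosmetic difference is that you re-derive the operator-norm matrix Rosenthal bound through Schatten-$q$ norms with $q\gtrsim\log d$, whereas the paper applies the off-the-shelf operator-norm inequality of Lemma~\ref{t:rosenthal} from \cite{cgt12} after a symmetrization step (a detail worth adding to your sketch, since that lemma assumes symmetrically distributed matrices).
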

We are ready to show the convergence rate of determinantal averaging,
which follows essentially by upper/lower bounding the enumerator
and denominator separately, using Corollary \ref{c:moments}.
\begin{proofof}{Theorem}{\ref{t:finite}}
We will apply Corollary \ref{c:moments} to the matrices
$\A_t=\H^{-\frac12}\Hbh_t\H^{-\frac12}$. Note that
$\A_t=\frac nk\sum_ib_i\Zbt_i+\lambda\H^{-1}$, where
each $\Zbt_i=\frac1n\H^{-\frac12}\Z_i
\H^{-\frac12}$ satisfies $\|\Zbt_i\|\leq \mu\cdot d/n$. Therefore, Corollary
\ref{c:moments} guarantees that for $\frac kn\geq C\frac{\mu
  d}{n}d\eta^{-2}\log^3\!\frac d\delta$, with probability $1-\delta$
the following average of determinants is concentrated around 1:
\begin{align*}
  Z \ \defeq\ \frac1m\sum_t\frac{\det(\Hbh_t)}{\det(\H)}\ =\ 
\frac1m\sum_t\det\!\big(\H^{-\frac12}\Hbh_t\H^{-\frac12}\big)\ \in\
  [1-\alpha,1+\alpha]\quad\text{ for }\alpha=\frac\eta{\sqrt{m}},
  \end{align*}
  along with a corresponding bound for the adjugate matrices.
  We obtain that with probability $1-2\delta$,
  \begin{align*}
  \bigg\|\,\frac{\sum_{t=1}^m\adj(\A_t)}
    {\sum_{t=1}^m\det(\A_t)} - \I\,\bigg\|
    &\leq \Big\|\frac1m\sum_t\adj\!
     \big(\A_t\big)-Z\,\I\Big\|\,/Z
\\     \text{(Corollary \ref{c:moments}a)}\quad
&\leq\frac1{1-\alpha}\Big\|\frac1m\sum_t\adj\!
     \big(\A_t\big)
     -\I\Big\|+\frac\alpha{1-\alpha}
   \\\text{(Corollary \ref{c:moments}b)}\quad
&\leq\frac\alpha{1-\alpha} + \frac\alpha{1-\alpha}.
  \end{align*}
It remains to multiply the above expressions by $\H^{-\frac12}$ from
both sides to recover the desired estimator:
\begin{align*}
\frac{\sum_{t=1}^m\det(\Hbh_t)\,\Hbh_t^{-1}}
    {\sum_{t=1}^m\det(\Hbh_t)} = \H^{-\frac12}\,\frac{\sum_{t=1}^m\adj(\A_t)}
    {\sum_{t=1}^m\det(\A_t)}\,\H^{-\frac12}\preceq
  \H^{-\frac12}
  \big(1+\tfrac{2\alpha}{1-\alpha}\big)\,\I\,\H^{-\frac12}
  =\big(1+\tfrac{2\alpha}{1-\alpha}\big)\H^{-1},
\end{align*}
and the lower bound follows identically. Appropriately adjusting the
constants concludes the proof.
\end{proofof}

As an application of the above result, we show how this allows us to
bound the estimation error in distributed Newton's method, when
using determinantal averaging.
\begin{proofof}{Corollary}{\ref{t:error}}
Follows from Theorem \ref{t:finite} by setting
$\Z_i=\ell_i''(\w^\top\x_i)\x_i\x_i^\top$ and $\B=\lambda\I$. Note
that the assumptions imply that $\|\Z_i\|\leq \mu$, so invoking the
theorem and denoting $\g$ as $\nabla\!\Lc(\w)$, with probability $1-\delta$ we have
\begin{align*}
    \bigg\|\,\frac{\sum_{t=1}^ma_t\,\pbh_t}{\sum_{t=1}^ma_t} \,-\,
  \p\,\bigg\|_{\H} &=
\bigg\|  \H^{\frac12}\bigg(\frac{\sum_{t=1}^m\det(\Hbh_t)\,\Hbh_t^{-1}}
    {\sum_{t=1}^m\det(\Hbh_t)} -
                     \H^{-1}\bigg)\H^{\frac12}\,\H^{-\frac12}\g\bigg\|
\\ &\leq \bigg\|  \H^{\frac12}\bigg(\frac{\sum_{t=1}^m\det(\Hbh_t)\,\Hbh_t^{-1}}
    {\sum_{t=1}^m\det(\Hbh_t)} -
                     \H^{-1}\bigg)\H^{\frac12}\bigg\|\cdot
     \big\|\H^{-\frac12}\g\big\|
  \\
  \text{(Theorem \ref{t:finite})}\quad
&\leq
     \big\|\H^{\frac12}\tfrac\eta{\sqrt{m}}\H^{-1}\H^{\frac12}\big\|\cdot\|\p\|_{\H}\
     = \ \tfrac\eta{\sqrt{m}}\cdot\|\p\|_{\H},
\end{align*}
which completes the proof of the corollary.
\end{proofof}

\subsubsection*{Acknowledgements}
MWM would like to acknowledge ARO, DARPA, NSF and ONR for providing partial
  support of this work. Also, MWM and MD thank the NSF for
  funding via the NSF TRIPODS program. Part of this work
  was done while MD and MWM were visiting the Simons Institute for the
  Theory of Computing.

\bibliographystyle{alpha}
\bibliography{pap}

\newpage
\appendix

\section{Omitted proofs from Section \ref{s:error}}
\label{a:moments}

In Section \ref{s:error}, we stated Lemma \ref{t:moments} and proved
the first part of it (a moment bound for the determinant).
Here, we provide the proof of the second part (a moment bound for the
adjugate).

\begin{lemma}[Lemma \ref{t:moments}b restated]
Let $\A=\frac1\gamma\sum_ib_i\Z_i\,+\B$, where
$b_i\sim\mathrm{Bernoulli}(\gamma)$ are independent, whereas $\Z_i$ and $\B$
are $d\times d$ psd matrices such that $\|\Z_i\|\leq \epsilon$ for all $i$ and $\E[\A]=\I$. If
$\gamma\geq 8\epsilon d\eta^{-2}(p+\ln d)$ for $0<\eta\leq0.25$ and
$p\geq 2$, then 
\begin{align*}
  \E\Big[\big\|\adj(\A)-\I\big\|^p\Big]^{\frac1p}\leq 9\eta.
\end{align*}
\end{lemma}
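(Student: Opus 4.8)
The plan is to reduce the operator-norm deviation of the adjugate to the two scalar quantities already controlled in the proof of part~(a): the determinant $\det(\A)$ and the spectral spread $\|\I-\A\|$. Since $\A$ is positive definite almost surely, $\adj(\A)=\det(\A)\,\A^{-1}$, and $\adj(\A)$, $\A$ and $\I$ share an eigenbasis; writing $\lambda_1,\dots,\lambda_d$ for the eigenvalues of $\A$, the eigenvalues of $\adj(\A)$ are $\det(\A)/\lambda_i$, which yields the stated sandwich $\frac{\det(\A)}{\lambda_{\max}(\A)}\I\preceq\adj(\A)\preceq\frac{\det(\A)}{\lambda_{\min}(\A)}\I$. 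In particular the deviation is attained at an extreme eigenvalue,
\[
\big\|\adj(\A)-\I\big\|=\max\Big\{\big|\tfrac{\det(\A)}{\lambda_{\min}(\A)}-1\big|,\ \big|\tfrac{\det(\A)}{\lambda_{\max}(\A)}-1\big|\Big\}.
\]
Writing $z=\|\I-\A\|=\max_i|1-\lambda_i|$, both extreme eigenvalues satisfy $\lambda\ge 1-z$ and $|1-\lambda|\le z$, so a single triangle inequality gives the pointwise bound $\big\|\adj(\A)-\I\big\|\le\frac{|\det(\A)-1|+\|\I-\A\|}{1-\|\I-\A\|}$ whenever $z<1$. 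This is the crux: it expresses the target through $|\det(\A)-1|$, bounded by $5\eta$ in part~(a), and $\|\I-\A\|$, whose sub-Gaussian tail $\Pr(\|\I-\A\|\ge z)\le 2\,\mathrm{e}^{-z^2 2dp/\eta^2}$ is exactly \eqref{eq:mtail}.

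First I would condition on the good event $\mathcal{E}=\{\|\I-\A\|\le \tfrac12\}$, on which the denominator $1-\|\I-\A\|\ge\tfrac12$ is harmless and the pointwise bound reads $\|\adj(\A)-\I\|\le 2\big(|\det(\A)-1|+\|\I-\A\|\big)$. Minkowski's inequality then splits the restricted $p$-th moment into $\E[|\det(\A)-1|^p]^{1/p}$, controlled by part~(a), and $\E[\|\I-\A\|^p]^{1/p}$, which I would bound by integrating the tail \eqref{eq:mtail}; since that tail concentrates at scale $\eta/\sqrt{d}$, its $p$-th root moment is comfortably $O(\eta)$ for all $p\ge2$. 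Together these give an $O(\eta)$ bound for the contribution of $\mathcal E$.

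The main obstacle is the rare complement $\mathcal E^{c}$, where both the factor $1/(1-\|\I-\A\|)$ and $\adj(\A)$ itself may blow up. Here I would discard the refined bound and use instead that each eigenvalue of $\adj(\A)$ is a product of $d-1$ eigenvalues of $\A$, so $\|\adj(\A)\|\le\lambda_{\max}(\A)^{d-1}\le(1+\|\I-\A\|)^{d-1}$ and hence $\|\adj(\A)-\I\|\le 1+(1+\|\I-\A\|)^{d-1}$. Cauchy--Schwarz separates $\E[\|\adj(\A)-\I\|^{p}\mathbf 1_{\mathcal E^{c}}]$ into a moment of this polynomial in $\lambda_{\max}(\A)$ (of order $2p(d-1)$, finite and controlled by the large-deviation regime $z\ge1$ of Lemma~\ref{t:bernstein}) times $\Pr(\mathcal E^{c})^{1/2}$. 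Because $\gamma\ge 8\epsilon d\eta^{-2}(p+\ln d)$ forces the exponent in \eqref{eq:mtail} to be of order $dp/\eta^2$, the probability $\Pr(\mathcal E^{c})$ is exponentially small and dominates the polynomial-in-$d$ growth of the moment for every $p\ge2$; the delicate point is verifying that this domination holds uniformly in $p$ and $d$ under the given lower bound on $\gamma$, so that the $\mathcal E^{c}$-term is negligible next to the $O(\eta)$ bulk. Collecting the two regimes and absorbing the absolute constants yields $\E[\|\adj(\A)-\I\|^p]^{1/p}\le 9\eta$.
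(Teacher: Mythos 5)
Your proposal is correct in substance but takes a genuinely different route from the paper. The paper never invokes part (a) as a black box: it derives fresh tail bounds for the extreme eigenvalues of $\adj(\A)$ directly, via $\lambda_{\max}(\adj(\A))\leq \ee^{\tr(\A-\I)}/(1+\delta_{\min})$ and $\lambda_{\min}(\adj(\A))\geq(1-\delta)(1-d\delta^2)\big(1-\tr(\I-\A)\big)$, splitting $\ln(1+x)$ between a trace event and an eigenvalue event, treating three regimes $x\in[\eta,\ee-1]$, $[\ee-1,\ee^d-1]$, $[\ee^d-1,\infty)$ separately, and then integrating the tails as in part (a). You instead reduce (b) to (a) algebraically: the sandwich $\frac{\det(\A)}{\lambda_{\max}(\A)}\I\preceq\adj(\A)\preceq\frac{\det(\A)}{\lambda_{\min}(\A)}\I$ plus one triangle inequality gives $\|\adj(\A)-\I\|\leq\big(|\det(\A)-1|+\|\I-\A\|\big)/\big(1-\|\I-\A\|\big)$, after which Minkowski on the good event and Cauchy--Schwarz against the exponentially small bad event finish the job. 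This is more modular and avoids re-deriving the delicate trace-correction trick for the adjugate (you inherit it for free through the determinant moment of part (a), which is exactly why your bound does not lose the $\sqrt{d}$ factor that a naive $(1\pm z)^{d}$ eigenvalue-product argument would); your treatment of $\mathcal{E}^c$ is also sound, since the tail exponent $dp/\eta^2$ in \eqref{eq:mtail} dominates the $\ee^{O(pd\eta^2)}$ growth of $\E[(1+\|\I-\A\|)^{2p(d-1)}]$ uniformly in $p$, $d$ and $\eta\leq 0.25$, provided you keep the $\eta$-dependence of $\Pr(\mathcal{E}^c)^{1/2}\leq\sqrt{2}\,\ee^{-dp/(4\eta^2)}$ rather than replacing it by an absolute constant (otherwise the remainder need not be $O(\eta)$ for very small $\eta$). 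The one concrete shortfall is the constant: on $\{\|\I-\A\|\leq\frac12\}$ your bound reads $2\big(|\det(\A)-1|+\|\I-\A\|\big)$, and with part (a) contributing $5\eta$ this yields at least $10\eta$, overshooting the stated $9\eta$; shrinking the good event to $\{\|\I-\A\|\leq\eta\}$ fixes the bulk (giving $\frac{1}{1-\eta}(5\eta+\eta)\leq 8\eta$) but then $\Pr(\mathcal{E}^c)\leq 2\ee^{-2dp}$ no longer decays in $\eta$, so certifying the remainder requires more care. Since Corollary~\ref{c:moments} and everything downstream absorb the constant into $C$, this is cosmetic rather than a gap, but to prove the lemma as literally stated you would need this extra bookkeeping, which the paper's direct tail integration handles natively.
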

\begin{proof}
  Let $\lambda_{\max}$ and $\lambda_{\min}$ denote the largest and
  smallest eigenvalue of $\adj(\A)$. We have
\begin{align*}
   \E\big[\|\adj(\A)-\I\|^p\big]
    &= \int_0^\infty
    \!\!\! px^{p-1}\Pr\big(\|\adj(\A)-\I\|\geq x\big)dx
\\ & \leq    \eta^p+\int_{\eta}^\infty \!\!\!px^{p-1} \!\Big(\!\Pr(\lambda_{\max}\geq 1+x) +
     \Pr(\lambda_{\min}\leq1-x)\Big)dx.
\end{align*}
We will now bound the two probabilities. Let $\delta_{\max}$ and
$\delta_{\min}$ denote the largest and smallest eigenvalue of
matrix $\A-\I$. Recall
the following concentration bounds implied by Lemma
\ref{t:bernstein} (see the first part of the
proof of Lemma \ref{t:moments}):
 \begin{align}
   \max\Big\{\Pr\!\big(\tr(\A-\I)\geq y\big),\
   \ \Pr\big(\tr(\A-\I)\leq -y\big)\Big\}\
   &\leq\
      \begin{cases}
      \ee^{-y^2\frac{2p}{\eta^2}}
      &\text{for }y\in [0,d];
      \\       \ee^{-y\,\frac{2dp}{\eta^2}}
      &\text{for }y\geq d,
    \end{cases}\label{eq:stail}
   \\
   \max\Big\{\Pr\big(\delta_{\max}\geq z\big),\ \
   \Pr\big(\delta_{\min}\leq -z\big)\Big\}\ 
   &\leq\
     \begin{cases}
              \ee^{-z^2\frac {2p}{\eta^2}}
       &\text{ for }z\in[0,\frac\eta{\sqrt{ 2d}}];
\\   \ee^{-z^2\frac {2dp}{\eta^2}}
       &\text{ for }z\in[\frac\eta{\sqrt{2d}},1];
       \\ \ee^{-z\frac{2dp}{\eta^2}}
       &\text{ for }z\geq 1.
       \end{cases}
 \end{align}
From the formula $\adj(\A)=\det(\A)\A^{-1}$ it follows
that $\lambda_{\max}\leq
\frac{\det(\A)}{1+\delta_{\min}}\leq\frac{\ee^{\tr(\A-\I)}}{1+\delta_{\min}}$
so we have 
 \begin{align*}
   \Pr\big(\lambda_{\max}\!\geq\! 1+x\big)
   &\leq \Pr\bigg(\frac{\ee^{\tr(\A-\I)}}{1+\delta_{\min}}\geq 1+x\bigg)
\\ & =\Pr\Big(\tr(\A-\I) + \ln\frac1{1+\delta_{\min}}\geq
     \ln(1+x)\Big)
\\ &\leq \Pr\Big(\tr(\A-\I)\geq \frac23\cdot\ln(1+x)\Big)\ +\ 
     \Pr\Big(\ln\frac1{1+\delta_{\min}}\geq \frac13\cdot\ln(1+x)\Big)
\\ &=\Pr\Big(\tr(\A-\I)\geq \frac23\cdot\ln(1+x)\Big)\ +\ 
     \Pr\Big(\delta_{\min}\leq \frac1{(1+x)^{\frac13}} - 1\Big).
   \\ &\leq
     \begin{cases}
\ee^{-\ln^2\!(1+x)\frac {8p}{9\eta^2}} +
   \ee^{-(1-(\frac1{1+x})^{\frac13})^2\frac{2p}{\eta^2}}\leq
   2\ee^{-x^2\frac{p}{20\eta^2}}
   &\text{ for }x\in[0,\ee\!-\!1],
   \\ \ee^{-\ln(1+x)\frac {4p}{3\eta^2}}
   + \ee^{-\frac1{16}\frac {2dp}{\eta^2}}\qquad\quad
   \leq 2\ee^{-\ln(1+x)\frac{p}{8\eta^2}}
&\text{ for }x\in[\ee\!-\!1,\ee^d\!-\!1].
\end{cases}
 \end{align*}
For $x\geq \ee^d-1$, since $\lambda_{\max}\leq
(1+\delta_{\max})^d\leq e^{d\delta_{\max}}$ and $\ln(1+x)\geq d$, we have: 
\begin{align*}
  \Pr\big(\lambda_{\max}\!\geq\! 1+x\big)\leq
  \Pr\big(\ee^{d\delta_{\max}}\!\geq\! 1+x\big)=
  \Pr\big(\delta_{\max}\!\geq\! \ln(1+x)/d\big)\leq
  \ee^{-\ln(1+x)\frac {2p}{\eta^2}}.
\end{align*}
Next, we use the fact that for $\delta=
\max\big\{|\delta_{\max}|,|\delta_{\min}|\big\}$ we have: 
\begin{align*}
  \lambda_{\min}\geq
\frac{\det(\A)}{1+\delta_{\max}}\geq
\frac{(1-\delta^2)^d}{(1+\delta_{\max})\ee^{\tr(\I-\A)}}
  \geq (1-\delta)(1-d\delta^2)\big(1-\tr(\I-\A)\big),
\end{align*}
so for $x\in[\eta,1]$ we have:
\begin{align*}
  \Pr\big(\lambda_{\min}\leq 1-x\big)
  &\leq \Pr(\delta\geq x/3) +
  \Pr(\delta^2\geq x/3d) + \Pr\big(\tr(\I-\A)\geq x/3\big)
  \\ &
\leq 2\ee^{-x^2\frac{2dp}{9\eta^2}}+2\ee^{-\frac
       x{3d}\frac{2dp}{\eta^2}} + \ee^{-x^2\frac {2p}{\eta^2}}\ \leq\
       5\ee^{-x^2\frac {2p}{9\eta^2}}.
\end{align*}
Putting everything together we obtain that:
\begin{align*}
  \E\big[\|\adj(\A)-\I\|^p\big]
&\leq
    \eta^p + \int_\eta^{\ee-1}px^{p-1}\,
  7\ee^{-x^2\frac p{20\eta^2}}\d x 
+ \int_{\ee-1}^\infty px^{p-1}
     3\ee^{-\ln(1+x)\frac {p}{8\eta^2}}\d x
  \\ &
\leq \eta^p + 7\sqrt{20\pi p}\,\eta^p 
       +\frac{3p}{\frac{p}{16\eta^2}-1}\big(\tfrac12\big)^{\frac{p}{16\eta^2}-1}
  \\ &
\leq \eta^p + 7\sqrt{20\pi p}\,\eta^p  + 6(3\eta)^p\ \leq \ (9\eta)^p,
\end{align*}
which completes the proof.
\end{proof}

As a consequence of the moment bounds shown in Lemma
\ref{t:moments}, we establish convergence with high probability for
the average of determinants and 
the adjugates. For the adjugate matrix, we require a matrix variant of the
Khintchine/Rosenthal inequalities.
\begin{lemma}[\cite{cgt12}]\label{t:rosenthal}
  Suppose that $p\geq 2$ 
  and $r=\max\{p,2\log d\}$.%
  \footnote{In \cite{cgt12} it is assumed that $d\geq 3$, however this
    assumption is not used anywhere in the proof.} Consider a finite
   sequence $\{\X_i\}$ of independent, symmetrically random, self-adjoint
   matrices with dimension $d\times d$. Then,
   \begin{align*}
     \E\Big[\big\|\sum\nolimits_i\X_i\big\|^p\Big]^{\frac 1p}
     \leq\sqrt{\ee r}\,
     \Big\|\sum\nolimits_i\E[\X_i^2]\Big\|^{\frac 12}+
     2\ee r \,\E\big[\!\max\nolimits_i\|\X_i\|^p\big]^{\frac 1p}.
   \end{align*}
 \end{lemma}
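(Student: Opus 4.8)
The statement is the matrix Rosenthal--Pinelis inequality, and the plan is to prove it in two stages: first reduce the operator-norm moment to a Schatten-norm moment at the exponent $r=\max\{p,2\log d\}$, and then establish the Schatten bound through the noncommutative Khintchine inequality combined with a self-bounding (fixed-point) argument on the variance. Throughout, write $\Sb=\sum_i\X_i$ and let $\|\M\|_{S_r}=(\tr|\M|^r)^{1/r}$ denote the Schatten $r$-norm. The reduction rests on the elementary two-sided comparison $\|\M\|\le\|\M\|_{S_r}\le d^{1/r}\|\M\|$ for $d\times d$ matrices: the choice $r\ge 2\log d$ forces $d^{1/r}\le d^{1/(2\log d)}=\sqrt{\ee}$, so the Schatten $r$-norm is equivalent to the operator norm up to the factor $\sqrt{\ee}$. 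Since $p\le r$, Lyapunov's inequality gives $\E[\|\Sb\|^p]^{1/p}\le\E[\|\Sb\|_{S_r}^r]^{1/r}$, so it suffices to bound $E:=\E[\|\Sb\|_{S_r}^r]^{1/r}$, and the $\sqrt{\ee}$ factors will reappear as the $\ee$'s in the final constants $\sqrt{\ee r}$ and $2\ee r$.

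For the Schatten bound, the hypothesis that each $\X_i$ is \emph{symmetrically} random is exactly what is needed: it yields $\Sb\overset{d}{=}\sum_i\varepsilon_i\X_i$ for Rademacher signs $\varepsilon_i$ drawn independently of $\{\X_i\}$, without paying a symmetrization cost. Conditioning on $\{\X_i\}$ and applying the noncommutative Khintchine inequality, whose sharp constant in $S_r$ is of order $\sqrt r$, to the Rademacher matrix chaos gives $\E_\varepsilon[\|\sum_i\varepsilon_i\X_i\|_{S_r}^r]^{1/r}\le\sqrt{r}\,\|(\sum_i\X_i^2)^{1/2}\|_{S_r}$. Taking the outer expectation and writing $V=\|\sum_i\X_i^2\|_{S_{r/2}}=\|(\sum_i\X_i^2)^{1/2}\|_{S_r}^2$ for the random variance proxy, this becomes $E^2\le r\,\E[V^{r/2}]^{2/r}$, which reduces the whole problem to controlling the $(r/2)$-th moment of $V$.

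The heart of the argument is estimating $\E[V^{r/2}]^{2/r}$. Split $\sum_i\X_i^2=\sum_i\E\X_i^2+\sum_i(\X_i^2-\E\X_i^2)$. By the triangle inequality in both the Schatten and the $L_{r/2}$ norms, the deterministic term contributes $\|\sum_i\E\X_i^2\|_{S_{r/2}}\le\sqrt{\ee}\,\sigma^2$ with $\sigma^2:=\|\sum_i\E\X_i^2\|$, which is exactly the target variance term. The centered term $\sum_i(\X_i^2-\E\X_i^2)$ is again a sum of independent, centered, symmetric matrices, so reapplying the inequality being proved at the smaller exponent $r/2$ bounds its moment by its own variance -- dominated by $\sigma^2$ times $M^2$, where $M:=\E[\max_i\|\X_i\|^r]^{1/r}$ -- together with a term of order $r\,M^2$. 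Feeding these back produces a quadratic inequality of the schematic form $E^2\le \ee r\,\sigma^2+2\ee r\,M\cdot E$, and solving it (via the elementary implication $E^2\le A+B\,E\Rightarrow E\le\sqrt A+B$) yields $E\le\sqrt{\ee r}\,\sigma+2\ee r\,M$, which is the claim after converting back to the operator norm.

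The main obstacle is closing this recursion with the clean constants $\sqrt{\ee r}$ and $2\ee r$ rather than worse ones: because the variance proxy $V$ is itself the norm of a sum of independent matrices, the estimate is genuinely self-referential rather than a single application of Khintchine, and one must carefully track how the $\sqrt r$ Khintchine factor and the dimensional $\sqrt{\ee}$ factors compound across the reduced exponent $r/2$ so that precisely one $\sqrt{\ee r}$ multiplies $\sigma$ and precisely one $2\ee r$ multiplies $M$. The second delicate point is the uncentering step, namely bounding the higher-order variance $\|\sum_i\E(\X_i^2-\E\X_i^2)^2\|$ and the fluctuation's own maximum term by combinations of $\sigma^2$, $M^2$, and $M\cdot E$, so that the remainder is absorbed into the $\max_i\|\X_i\|$ term instead of inflating the leading variance term.
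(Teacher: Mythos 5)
You should first note that the paper itself never proves this lemma: it is imported verbatim from \cite{cgt12} (with only a footnote about the $d\geq 3$ assumption), so your proposal is measured against the known Chen--Gittens--Tropp argument, whose skeleton --- Schatten/operator-norm comparison at $r=\max\{p,2\log d\}$, noncommutative Khintchine with an $O(\sqrt r)$ constant, symmetrization, and a self-bounding quadratic --- your plan correctly identifies. However, two of your steps would fail as written. The first is your opening reduction $\E[\|\Sb\|^p]^{1/p}\le\E[\|\Sb\|_{S_r}^r]^{1/r}$: this raises the \emph{outer} exponent from $p$ to $r$ once and for all, so every subsequent moment of $\max_i\|\X_i\|$ emerges at level $r$ --- indeed your own $M$ is $\E[\max_i\|\X_i\|^r]^{1/r}$. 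When $p<2\log d$ this quantity strictly dominates the $\E[\max_i\|\X_i\|^p]^{1/p}$ appearing in the statement (higher moments bound lower ones, not conversely), so the inequality you would obtain is genuinely weaker than the lemma. The correct bookkeeping keeps the outer exponent at $p$ throughout and uses the Schatten index $r$ only inside: condition on $\{\X_i\}$, apply Lyapunov to the conditional Rademacher average, $(\E_\varepsilon\|\cdot\|_{S_r}^p)^{1/p}\le(\E_\varepsilon\|\cdot\|_{S_r}^r)^{1/r}$, then Khintchine; the maximum then enters, via Cauchy--Schwarz at level $p$, with exactly its $L_p$ norm. (In this paper the gap happens to be invisible, since Corollary~\ref{c:moments} invokes the lemma with $p\geq 2\log d$, i.e., $r=p$; but the lemma as stated is stronger.)

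The second failure is that your closure step is circular as described. ``Reapplying the inequality being proved at the smaller exponent $r/2$'' has no induction structure to stand on: $r$ is fixed by $p$ and $d$, the hypothesis requires exponent at least $2$ while $r/2$ may be as small as $1$, and even a dyadic recursion down to a base level would compound the Khintchine constants at every level, destroying precisely the clean $\sqrt{\ee r}$ and $2\ee r$ you set out to preserve. Moreover, the matrices $\X_i^2-\E[\X_i^2]$ are centered but \emph{not} symmetrically distributed --- squaring destroys sign-symmetry --- so the hypothesis of the lemma would not apply to them in any case, and a symmetrization step (costing a factor $2$) is unavoidable there. The known proof closes the loop in a single pass rather than by recursion: after symmetrizing the centered squares, apply Khintchine once more and combine the operator inequality $\sum_i\X_i^4\preceq\max_i\|\X_i\|^2\cdot\sum_i\X_i^2$ with Cauchy--Schwarz, which re-expresses the fluctuation of the variance proxy in terms of that very proxy; this produces the quadratic inequality directly, with no regress, after which your elementary step $E^2\le A+B\,E\Rightarrow E\le\sqrt A+B$ does finish the argument. (A minor constant slip along the way: $\|\sum_i\E[\X_i^2]\|_{S_{r/2}}\le d^{2/r}\,\sigma^2\le\ee\,\sigma^2$, not $\sqrt{\ee}\,\sigma^2$; the $\sqrt{\ee}$ appears only after taking the square root.)
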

\begin{corollary}[Corollary~\ref{c:moments} restated]
There is $C>0$ s.t.~for $\A$ as in Lemma \ref{t:moments}
with all $\Z_i$ rank-$1$ and $\gamma\geq C\epsilon
d\eta^{-2}\log^3\!\frac d\delta$, 
  \begin{align*}
    (a)\ \Pr\!\bigg(\Big|\frac1m\sum_{t=1}^m\det(\A_t) - 1\Big|\geq
\!\frac\eta{\sqrt{m}}\bigg)\leq\delta\quad\text{and}\quad
   (b)\   \Pr\!\bigg(\Big\|\frac1m\sum_{t=1}^m\adj(\A_t) - \I\Big\|\geq
    \!\frac\eta{\sqrt{m}}\bigg)\leq\delta,
  \end{align*}
  where $\A_1,\dots,\A_m$ are independent copies of $\A$.
\end{corollary}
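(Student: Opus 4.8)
The plan is to prove both parts by the same three-step template: first center the i.i.d.\ quantities using Lemma~\ref{t:det}, then bound the $p$-th moment of their average by a Rosenthal-type inequality fed with the per-sample moment bounds of Lemma~\ref{t:moments}, and finally convert to a tail bound via Markov's inequality with $p\simeq\log\frac1\delta$. The single idea that makes the bookkeeping work is to invoke Lemma~\ref{t:moments} not at the target accuracy $\eta$ but at a reduced parameter $\eta'\ll\eta$, chosen so that the logarithmic prefactors produced by Rosenthal and by Markov are exactly reabsorbed; this is also what inflates the sampling requirement from $(p+\ln d)(\eta')^{-2}$ to the stated $\log^3\frac d\delta$.

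For part~(a), I set $X_t=\det(\A_t)-1$. These are i.i.d.\ and, by Lemma~\ref{t:det}(a) with $\E[\A]=\I$, mean-zero. I would apply the scalar Rosenthal/Marcinkiewicz--Zygmund inequality (the $d=1$ case of Lemma~\ref{t:rosenthal}, after a Rademacher symmetrization) to $\sum_t X_t$, bounding the variance term by $\E[X_t^2]^{1/2}\le 5\eta'$ (Lemma~\ref{t:moments}(a) with $p=2$) and the ``max'' term by $m^{1/p}\E[|X_t|^p]^{1/p}\le m^{1/p}5\eta'$ (Lemma~\ref{t:moments}(a) at order $p$). Dividing by $m$ and using $m^{1/p-1/2}\le1$ for $p\ge2$ gives $\E[|\frac1m\sum_t X_t|^p]^{1/p}\le\frac{c\,p\,\eta'}{\sqrt m}$, and Markov at level $\frac{\eta}{\sqrt m}$ then yields the claim once $\eta'$ is small enough.

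For part~(b), I set $\Y_t=\adj(\A_t)-\I$, which are i.i.d.\ self-adjoint and, by Lemma~\ref{t:det}(b), mean-zero. Since Lemma~\ref{t:rosenthal} requires symmetrically random matrices, I would first symmetrize, $\E[\|\sum_t\Y_t\|^p]^{1/p}\le 2\,\E[\|\sum_t\varepsilon_t\Y_t\|^p]^{1/p}$ with Rademacher $\varepsilon_t$, and apply the lemma to $\{\varepsilon_t\Y_t\}$. The variance term is controlled by $\|\sum_t\E[\Y_t^2]\|^{1/2}=\sqrt m\,\|\E[\Y^2]\|^{1/2}\le\sqrt m\,\E[\|\Y\|^2]^{1/2}\le 9\eta'\sqrt m$ (using self-adjointness, Jensen, and Lemma~\ref{t:moments}(b) with $p=2$), and the max term by $\E[\max_t\|\Y_t\|^p]^{1/p}\le m^{1/p}\,9\eta'$ (Lemma~\ref{t:moments}(b) at order $p$). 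Collecting terms, dividing by $m$, and again using $m^{1/p-1/2}\le1$ gives, with $r=\max\{p,2\log d\}$, a bound of the form $\E[\|\frac1m\sum_t\Y_t\|^p]^{1/p}\le\frac{c\,er\,\eta'}{\sqrt m}$, after which Markov at level $\frac\eta{\sqrt m}$ finishes the argument.

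The last step is the delicate one, and it is where I expect the main obstacle. Choosing $p=\log\frac1\delta$ makes $\delta^{1/p}=e^{-1}$, so Markov succeeds once $c\,er\,\eta'/\eta\le e^{-1}$, i.e.\ $\eta'\simeq\eta/r$ with $r\simeq\log\frac d\delta$. Crucially the \emph{max} (second) Rosenthal term carries the full factor $er$ rather than $\sqrt{er}$, so $\eta'$ must be taken proportional to $\eta/r$ and not $\eta/\sqrt r$; this alone contributes $(\eta')^{-2}\simeq\eta^{-2}\log^2\frac d\delta$. Feeding this into the hypothesis $\gamma\ge 8\epsilon d(\eta')^{-2}(p+\ln d)$ of Lemma~\ref{t:moments}, the extra $(p+\ln d)\simeq\log\frac d\delta$ supplies the third logarithm, yielding a requirement of order $\epsilon d\eta^{-2}\log^3\frac d\delta$ and thereby fixing the absolute constant $C$. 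The care needed to balance $p$, $\eta'$ and $r$ so that every prefactor collapses to the single clean deviation $\frac\eta{\sqrt m}$, while keeping $\gamma$ independent of $m$ (which works precisely because $m^{1/p-1/2}\le1$ for $p\ge2$), is the crux of the proof.
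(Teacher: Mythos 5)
Your proposal is correct and follows essentially the same route as the paper's proof: center via the expectation identities (Lemma~\ref{t:det}), symmetrize and apply the matrix Rosenthal inequality (Lemma~\ref{t:rosenthal}, scalar case for the determinants) with the moment bounds of Lemma~\ref{t:moments} invoked at a reduced accuracy (your $\eta'$ is the paper's $\sigma=\eta/(4C'p)$), then conclude by Markov with $p\asymp\max\{\log d,\log\frac1\delta\}$. Your accounting of where each of the three logarithms comes from---$(\eta')^{-2}\asymp\eta^{-2}\log^2\frac d\delta$ from the linear-in-$r$ Rosenthal prefactor plus one more $\log$ from the $(p+\ln d)$ factor in Lemma~\ref{t:moments}---matches the paper's derivation of the $\gamma\geq C\epsilon d\eta^{-2}\log^3\frac d\delta$ requirement exactly.
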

\begin{proof}
  Applying Lemma \ref{t:moments} to the matrix
$\A$, for appropriate $C$ and any fixed $p\geq 2$, if
$\gamma\geq C\epsilon d\sigma^{-2}(p+\ln d)$, then for any $s\in[2,p]$ we have
$\E\big[\|\adj(\A_t)-\I\|^s\big]\leq \sigma^s$.
With the additional assumption that $\Z_i$'s are rank-$1$,
Theorem \ref{t:det} implies that $\E\big[\adj(\A_t)\big]=\I$, so by
a standard symmetrization argument, where $r_t$ denote independent
Rademacher random variables,
\begin{align*}
  \E\bigg[\Big\|\frac1m\sum_{t=1}^m\adj(\A_t)-\I\Big\|^p\bigg]^{\frac1p}
  &\leq 2\cdot \E\bigg[\Big\|\sum_t
      \frac{r_t}m\big(\adj(\A_t)-\I\big)\Big\|^p\bigg]^{\frac1p}.
\end{align*}
Applying Lemma \ref{t:rosenthal} to the matrices
$\X_t=\frac1m\Y_t$, where $\Y_t=r_t\big(\adj(\A_t)-\I\big)$, we obtain that:
\begin{align*}
  \E\bigg[\Big\|\frac1m\sum_t\Y_t\Big\|^p\bigg]^{\frac1p}
&\leq \sqrt{\ee r}\,
     \Big\|m\cdot\frac1{m^2} \E[\Y_t^2]\Big\|^{\frac 12}+
\frac{2\ee r}m\,\E\Big[\sum_{i=1}^m\|\Y_i\|^p\Big]^{\frac1p}
\\ &\leq\sqrt{\frac{\ee r}m}\cdot\E\big[\|\Y\|^2\big]^{\frac12}+\frac{2\ee r}m
     \Big(m\cdot\E\big[\|\Y\|^p\big]\Big)^{\frac1p}
\\ &\leq \bigg(\sqrt{\frac{\ee r}m} + \frac{2\ee
     r}{m^{1-\frac1p}}\bigg)\cdot \sigma\leq
     C'\cdot\frac{p\sigma}{\sqrt{m}},
\end{align*}
for $p\geq 2\log d$ and $C'$ chosen appropriately. Now Markov's inequality yields:
\begin{align*}
    \Pr\bigg(\Big\|\frac1m\sum_t\adj(\A_t)-\I\Big\|\geq \alpha\bigg)
    &\leq
    \alpha^{-p}\cdot
    \E\bigg[\Big\|\frac1m\sum_t
      \adj(\A_t)-\I\Big\|^p\bigg]
\leq \bigg(\frac{2C'p\sigma}{\alpha\sqrt{m}}\bigg)^p.
\end{align*}
Setting $\alpha\!=\!\frac{\eta}{\sqrt m}$, $\sigma\!=\!\frac{\eta}{4C'p}$ and
$p=2\,\lceil\max\{\log d, \log\frac1\delta\}\rceil$, the above bound becomes
$(\frac12)^{p}\leq\delta$ for
$k\geq C''\mu d^2\eta^{-2}(\log^3\!\frac1\delta+\log^3\! d)$.
Showing the analogous result for the average of determinants of matrices $\A_t$ instead of
the adjugates follows identically, except that Lemma
\ref{t:rosenthal} can be replaced with the standard scalar Rosenthal's inequality.
\end{proof}

\section{Proof of Newton convergence}
\label{a:newton}

Here, we provide a proof of Corollary \ref{c:rate}, which describes
the convergence guarantees for the approximate Newton step obtained via
determinantal averaging. It suffices to show the following lemma.

\begin{lemma}\label{l:rate}
Let loss $\Lc$ be defined as in \eqref{eq:loss} and assume its Hessian
is L-Lipschitz (Assumption \ref{a:lipschitz}). If
\begin{align*}
  \big\|\pbh - \p^*\big\|_{\nabla^2\!\Lc(\w)}\leq \alpha\,
  \|\p^*\|_{\nabla^2\!\Lc(\w)},\quad\text{where}\quad \p^* = \nabla^{-2}\!\Lc(\w)\,\nabla\!\Lc(\w),
\end{align*}
then the approximate Newton step $\wbt = \w - \pbh$ satisfies:
\begin{align*}
  \|\wbt-\w^*\|\leq \max\Big\{\alpha\sqrt\kappa\, \|\w-\w^*\|,\
  \frac{2L}{\sigma_{\min}}\,\|\w-\w^*\|^2\Big\},\quad\text{where }\w^*=\argmin_\w\Lc(\w),
\end{align*}
where $\kappa$ and $\sigma_{\min}$ are the condition number
and smallest eiganvalue of $\nabla^{2}\!\Lc(\w)$, respectively.
\end{lemma}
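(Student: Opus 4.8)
The plan is to run the classical local convergence analysis of Newton's method, treating the averaged step $\pbh$ as a perturbation of the exact step $\p^*$. Set $\H=\nabla^2\!\Lc(\w)$, $\g=\nabla\!\Lc(\w)$, $\r=\w-\w^*$, and let $\sigma_{\min},\sigma_{\max}$ be the extreme eigenvalues of $\H$ (so $\kappa=\sigma_{\max}/\sigma_{\min}$ and $\|\cdot\|_\H$ is the Mahalanobis norm in the hypothesis). First I would split the error by the triangle inequality,
\[
\|\wbt-\w^*\|=\|\r-\pbh\|\ \le\ \|\r-\p^*\|+\|\p^*-\pbh\|,
\]
so that the first term (exact Newton residual) produces the quadratic part of the bound and the second term (approximation error) produces the linear part.

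For the exact residual I would use that $\w^*$ is stationary, so $\g=\nabla\!\Lc(\w)-\nabla\!\Lc(\w^*)=\bar\H\,\r$ with $\bar\H=\int_0^1\nabla^2\!\Lc(\w^*+t\r)\,\d t$. Then $\r-\p^*=\r-\H^{-1}\bar\H\,\r=\H^{-1}(\H-\bar\H)\r$, and the $L$-Lipschitz assumption (Assumption~\ref{a:lipschitz}) gives $\|\H-\bar\H\|\le\int_0^1 L\,\|(1-t)\r\|\,\d t=\tfrac{L}{2}\|\r\|$. Combined with $\|\H^{-1}\|=1/\sigma_{\min}$ this yields the classical quadratic estimate $\|\r-\p^*\|\le\tfrac{L}{2\sigma_{\min}}\|\r\|^2$.

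For the approximation error I would first pass from the Mahalanobis to the Euclidean norm, $\|\p^*-\pbh\|\le\sigma_{\min}^{-1/2}\|\p^*-\pbh\|_\H\le\alpha\,\sigma_{\min}^{-1/2}\|\p^*\|_\H$, invoking the hypothesis. The remaining task is to control $\|\p^*\|_\H$ by $\|\r\|$: since $\|\p^*\|_\H=\|\H^{-1/2}\g\|=\|\H^{-1/2}\bar\H\,\r\|$ and $\H^{-1/2}\bar\H\,\r=\H^{1/2}\r+\H^{-1/2}(\bar\H-\H)\r$, I get $\|\p^*\|_\H\le\sqrt{\sigma_{\max}}\,\|\r\|+\tfrac{L}{2\sqrt{\sigma_{\min}}}\|\r\|^2$, hence the approximation error is at most $\alpha\sqrt{\kappa}\,\|\r\|+\tfrac{\alpha L}{2\sigma_{\min}}\|\r\|^2$. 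Adding the two pieces and using $\alpha\le 1$ bounds the total by $\alpha\sqrt{\kappa}\,\|\r\|+\tfrac{L}{\sigma_{\min}}\|\r\|^2$. Finally I would collapse this sum into the stated maximum by a case split on whether the linear or quadratic term dominates (equivalently, whether $\|\r\|$ exceeds a threshold of order $\alpha\sqrt{\kappa}\,\sigma_{\min}/L$), absorbing the resulting constant factors into the deliberately generous $\tfrac{2L}{\sigma_{\min}}$ quadratic coefficient.

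I expect the main obstacle to be the middle step, relating the $\H$-length $\|\p^*\|_\H$ of the Newton step to the distance-to-optimum $\|\r\|$. The naive estimate carries a stray quadratic term $\tfrac{L}{2\sqrt{\sigma_{\min}}}\|\r\|^2$, and the delicate point is to route it through $\H^{-1/2}(\bar\H-\H)\r$ (losing only $\sigma_{\min}^{-1/2}$) rather than through $\|\r-\p^*\|_\H$ (which would cost an extra $\sqrt{\kappa}$); keeping the tighter form is exactly what makes the linear coefficient come out as $\alpha\sqrt{\kappa}$ and lets the leftover quadratic piece be folded harmlessly into the quadratic term of the final maximum.
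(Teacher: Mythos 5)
Your proof is correct in substance but takes a genuinely different route from the paper's. The paper does not redo the local Newton analysis at all: it introduces the auxiliary quadratic $\phi(\p)=\p^\top\H\p-2\p^\top\g$, observes that your hypothesis is equivalent to the suboptimality condition $\phi(\pbh)\leq(1-\alpha^2)\min_\p\phi(\p)$ (since $\phi(\pbh)-\phi(\p^*)=\|\pbh-\p^*\|_{\H}^2$ and $\phi(\p^*)=-\|\p^*\|_{\H}^2$), and then invokes Lemma~\ref{l:9} from the GIANT paper \cite{distributed-newton} as a black box, obtaining $\|\wbt-\w^*\|_{\H}^2\leq L\,\|\w-\w^*\|^2\|\wbt-\w^*\|+\frac{\alpha^2}{1-\alpha^2}\|\w-\w^*\|_{\H}^2$, from which a two-case comparison of the right-hand terms yields the maximum. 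Your argument is instead fully self-contained: the triangle-inequality split of $\r-\pbh$ into exact Newton residual plus approximation error, the integral mean-value representation $\g=\bar{\H}\,\r$ with $\|\H-\bar{\H}\|\leq\frac L2\|\r\|$, and—the step you correctly flagged as the delicate one—the estimate $\|\p^*\|_{\H}\leq\sqrt{\sigma_{\max}}\,\|\r\|+\frac{L}{2\sqrt{\sigma_{\min}}}\|\r\|^2$ routed through $\H^{-1/2}(\bar{\H}-\H)\r$, which is exactly what keeps the linear coefficient at order $\alpha\sqrt{\kappa}$ rather than $\alpha\kappa$. What your route buys is independence from the cited lemma and a transparent accounting of where Lipschitzness enters; what the paper's route buys is brevity and a reduction to the $\phi$-suboptimality framework standard in the inexact-Newton literature, reusable for other step approximations.

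One caveat about your last step: from $\|\wbt-\w^*\|\leq\alpha\sqrt{\kappa}\,\|\r\|+\frac{L}{\sigma_{\min}}\|\r\|^2$, the case split does not give the stated maximum verbatim. In the branch where the quadratic term dominates you indeed get the bound $\frac{2L}{\sigma_{\min}}\|\r\|^2$, but in the other branch you get $2\alpha\sqrt{\kappa}\,\|\r\|$, and that factor of $2$ lands on the \emph{linear} coefficient, where the statement has no slack to absorb it—so, strictly, your argument proves the lemma with $2\alpha\sqrt{\kappa}$ in place of $\alpha\sqrt{\kappa}$ (and you implicitly used $\alpha\leq1$, which does hold in all applications since $\alpha=\eta/\sqrt{m}$ with $\eta<1$). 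This is not a substantive defect: the paper's own case analysis of Lemma~\ref{l:9} produces the linear coefficient $\frac{\alpha}{\sqrt{1-\alpha^2}}\sqrt{2\kappa}$, which likewise exceeds the $\alpha\sqrt{\kappa}$ claimed in the statement, and in the only downstream use (Corollary~\ref{c:rate}) such constant rescalings of $\eta$ are absorbed into the unspecified constant $C$ in the sample-size requirement. You should simply state the factor-$2$ loss rather than claim it can be folded into the quadratic coefficient.
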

\begin{proof}
The lemma essentially follows via the standard analysis of the
Newton's method. For the sake of completeness we will outline the
proof following \cite{distributed-newton}. Denoting $\H=\nabla^2\!\Lc(\w)$ and
$\g=\nabla\!\Lc(\w)$, we define the auxiliary function
\begin{align*}
  \phi(\p) \defeq \p^\top\H\p - 2\p^\top\g.
\end{align*}
By definition of $\phi(\p)$ we have $\phi(\p^*) =\phi(\H^{-1}\g)=
-\|\p^*\|_{\H}^2$. If follows that
\begin{align*}
  \phi(\pbh)-\phi(\p^*)
  &= \|\H^{\frac12}\pbh\|^2-2\g^\top\H^{-1}\H\pbh + \|\H^{\frac12}\p^*\|^2
  \\ & = \|\H^{\frac12}(\pbh-\p^*)\|^2
= \big\|\pbh-\p^*\big\|_{\H}^2\leq \alpha^2\,\|\p^*\|_{\H}^2 = -\alpha^2\phi(\p^*).
\end{align*}
We invoke the classical result in local convergence analysis of Newton's method
\cite{nocedal-wright}, using the statement of Lemma 9 in \cite{distributed-newton}.
\begin{lemma}[\cite{distributed-newton}]\label{l:9}
  Assume Hessian is L-Lipschitz and that $\pbh$ satisfies
  $\phi(\pbh)\leq(1-\alpha^2)\,\min_\p\phi(\p)$. Then $\wbt=\w-\pbh$ satisfies
  \begin{align*}
    \|\wbt-\w^*\|_{\H}^2\leq L\,\|\w-\w^*\|^2\|\wbt-\w^*\| + \frac{\alpha^2}{1-\alpha^2}\|\w-\w^*\|_{\H}^2.
  \end{align*}
\end{lemma}
Lemma \ref{l:9} immediately implies that one of the following two
inequalities hold:
\begin{align*}
  \|\wbt-\w^*\|&\leq \frac{2L}{\sigma_{\min}(\H)}\cdot\|\w-\w^*\|^2,
\\ \|\wbt-\w^*\|&\leq
                  \frac{\alpha}{\sqrt{1-\alpha^2}}\sqrt{\frac{2\lambda_{\max}(\H)}{\lambda_{\min}(\H)}}\cdot
                  \|\w-\w^*\|,
\end{align*}
which proves Lemma \ref{l:rate}.
\end{proof}
Note that Corollary \ref{c:rate} follows immediately by combining
Corollary \ref{t:error} with Lemma \ref{l:rate}.

\begin{figure}[H]
\includegraphics[width=0.5\textwidth]{figs/vol-unif-abalone}\nobreak
\includegraphics[width=0.5\textwidth]{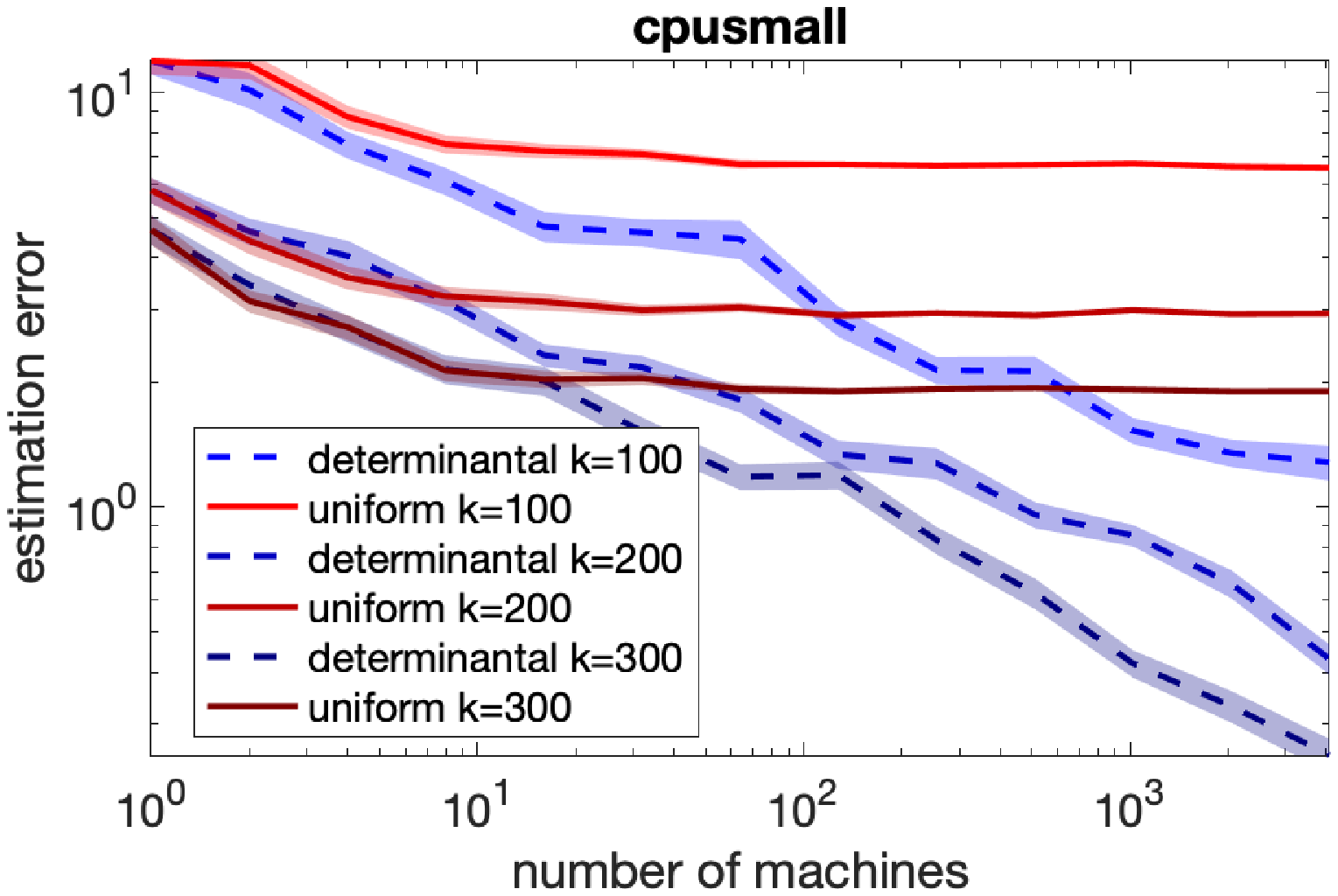}
\includegraphics[width=0.5\textwidth]{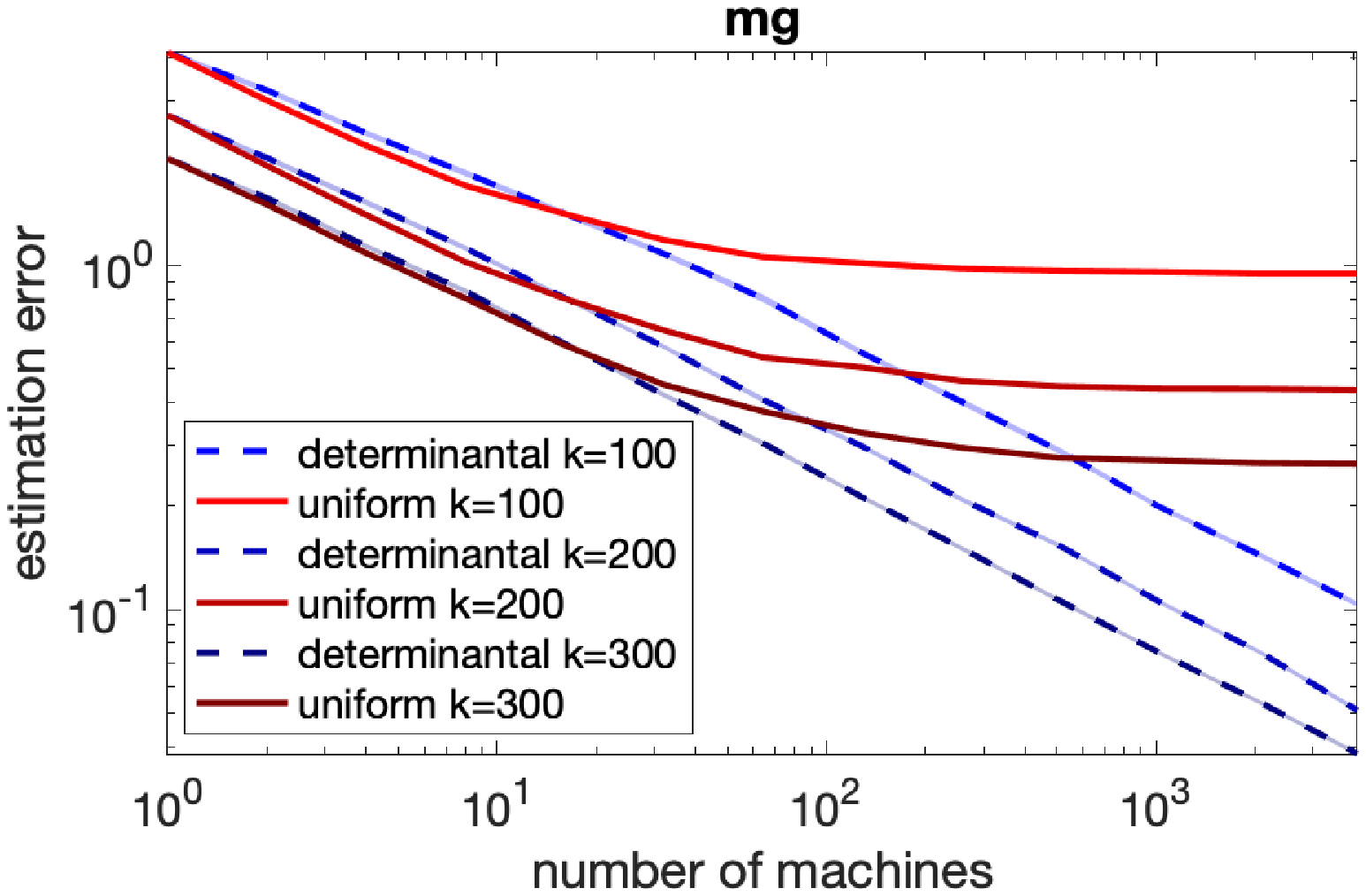}\nobreak
\includegraphics[width=0.5\textwidth]{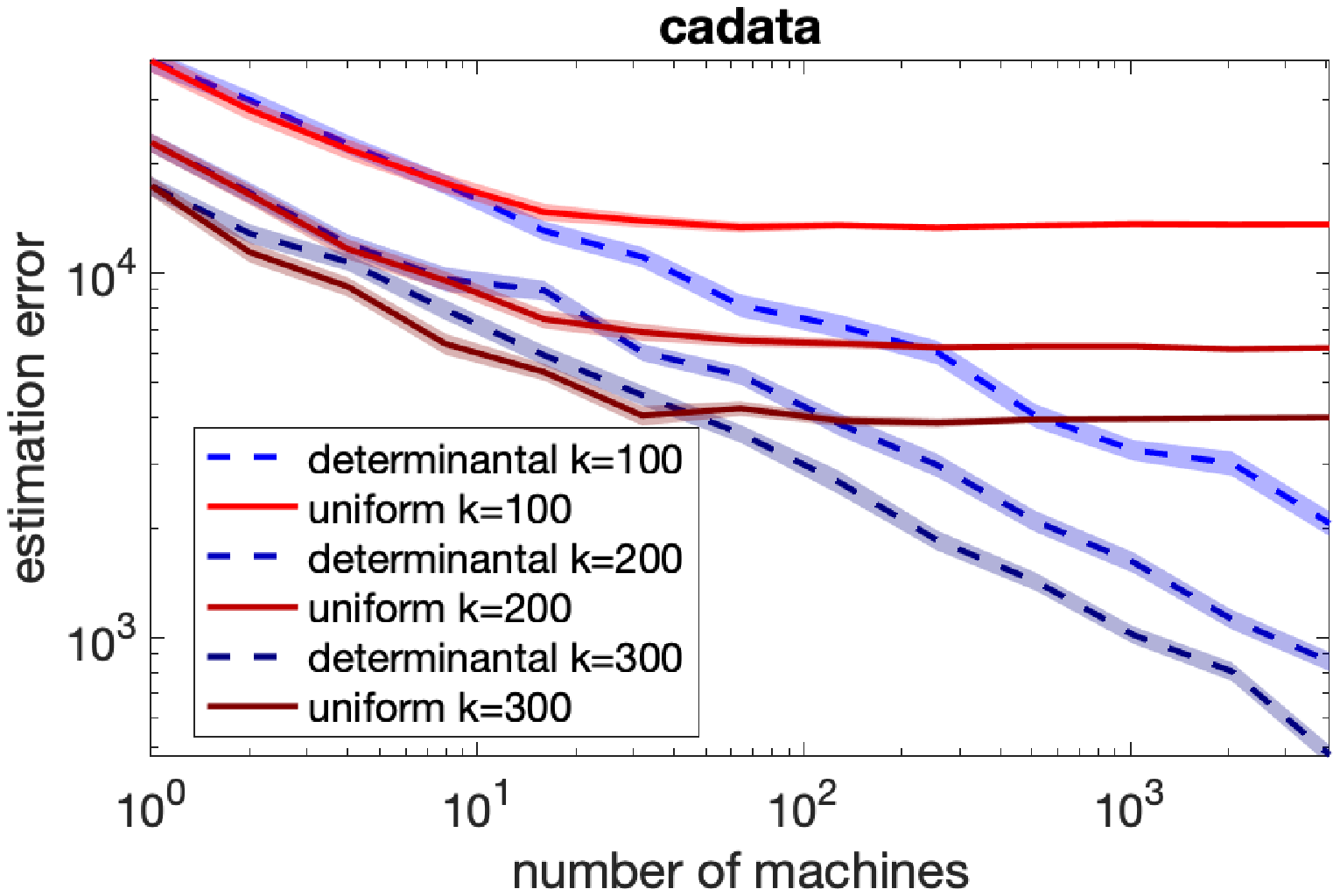}
\caption{Comparison of the estimation error between
\emph{determinantal} and \emph{uniform} averaging on four libsvm datasets.}
\label{f:plots}
\end{figure}

\section{Experiments}
\label{a:experiments}
In this section, we experimentally evaluate the estimation error of
determinantal averaging for the Newton's method (following the setup of
Section \ref{ss:newton}), and we compare it against
uniform averaging \cite{distributed-newton}. Although there are
obvious follow-up directions for empirical and implementational work,
here we focus our experiments on demonstrating the existence
of the inversion bias with previous methods and how our determinantal
averaging solves this problem. We use square loss
$\ell_i(\w^\top\x_i) = (\w^\top\x_i-y_i)^2$, where $y_i$ are the
real-valued labels for a regression problem, and we run the experiments on
several benchmark regression datasets from the libsvm repository \cite{libsvm}. In this setting,
the local Newton estimate computed from the starting vector $\w=\zero$
is given by: 
\begin{align*}
  \pbh =
  \bigg(\frac1k\sum_{i=1}^nb_i\x_i\x_i^\top+\lambda\I\bigg)^{-1}\frac1n\sum_{i=1}^ny_i\x_i,\quad\text{where}\quad b_i\sim\mathrm{Bernoulli}(k/n).
\end{align*}
In all of our experiments we set the regularization parameter to
$\lambda=\frac1n$. Let $\pbh_1,\dots,\pbh_m\simiid \pbh$ be $m$ distributed
local estimates and denote $\Hbh_t$ as the $t$th local Hessian
estimate. The two averaging strategies we compare are:  
\begin{align*}
  \text{determinantal:}\quad \pbh_{\det} =
  \frac{\sum_{t=1}^m\det(\Hbh_t)\,\pbh_t}{\sum_{t=1}^m\det(\Hbh_t)},
  \qquad\text{uniform:}\quad \pbh_{\mathrm{uni}}
  =\frac1m\sum_{t=1}^m\pbh_t.
\end{align*}
Figure \ref{f:plots} plots the estimation errors
$\|\pbh_{\det}-\p^*\|$ and $\|\pbh_{\mathrm{uni}}-\p^*\|$, where
$\p^*$ is the exact Newton step starting from $\w=\zero$, for
datasets \textsc{abalone}, \textsc{cpusmall}, \textsc{mg}%
\footnote{We expanded features to all degree 2 monomials, and removed redundant ones.}
and \textsc{cadata} \cite{libsvm} (for convenience, the plot from
Figure \ref{fig:abalone} in Section \ref{ss:newton} is repeated here). The reported results are averaged
over 100 trials, with shading representing standard error. We
consistently observe that for a small number of machines $m$ both methods effectively
reduce the estimation error, however after a certain point uniform averaging converges to a biased
estimate and the estimation error flattens out. On the other hand,
determinantal averaging continues to converge to the optimum
as the number of machines keeps growing. We remark that for some datasets
determinantal averaging exhibits larger variance than uniform
averaging, especially when local sample size is small. Reducing that
variance, for example through some form of additional regularization,
is a new direction for future work.

\end{document}